\renewcommand{\paragraph}[1]{\textbf{#1}}
\newtheorem{theorem}{Theorem}[section]
\newtheorem{proposition}[theorem]{Proposition}
\newtheorem{corollary}[theorem]{Corollary}
\theoremstyle{definition}
\newtheorem{definition}{Definition}
\newcommand{\modelname}{GSN}
\begin{document}

\title{Improving Graph Neural Network Expressivity via Subgraph Isomorphism Counting}

\author[1]{Giorgos Bouritsas\thanks{g.bouritsas@imperial.ac.uk}}
\author[1,2]{Fabrizio Frasca\thanks{ffrasca@twitter.com}}
\author[1]{Stefanos Zafeiriou \thanks{s.zafeiriou@imperial.ac.uk}}
\author[1,2]{Michael M. Bronstein\thanks{mbronstein@twitter.com}}
\affil[1]{Imperial College London, UK}
\affil[2]{Twitter, UK}
\date{}

\maketitle

\begin{abstract}
While Graph Neural Networks (GNNs) have achieved remarkable results in a variety of applications, recent studies exposed important shortcomings in their ability to capture the structure of the underlying graph. It has been shown that the expressive power of standard GNNs is bounded by the Weisfeiler-Leman (WL) graph isomorphism test, from which they inherit proven limitations such as the inability to detect and count graph substructures. On the other hand, there is significant empirical evidence, e.g. in network science and bioinformatics, that substructures are often intimately related to downstream tasks.
To this end, we propose ``Graph Substructure Networks'' (GSN), a topologically-aware message passing scheme based on substructure encoding. We theoretically analyse the expressive power of our architecture, showing that it is strictly more expressive than the WL test, and provide sufficient conditions for universality. Importantly,  we do not attempt to adhere to the WL hierarchy; this allows us to retain multiple attractive properties of standard GNNs such as locality and linear network complexity, while being able to disambiguate even hard instances of graph isomorphism. We perform an extensive experimental evaluation on graph classification and regression tasks and obtain state-of-the-art results in diverse real-world settings including molecular graphs and social networks. The code is publicly available at \url{https://github.com/gbouritsas/graph-substructure-networks}.
\end{abstract}

\section{Introduction}\label{sec:introduction}

The field of graph representation learning has undergone a rapid growth in the past few years. 
In particular, Graph Neural Networks (GNNs), a family of neural architectures designed for irregularly structured data, have been successfully applied to problems 
ranging from social networks and recommender systems \cite{ying2018graph} to bioinformatics \cite{fout2017protein, gainza2020deciphering}, chemistry \cite{duvenaud2015convolutional, gilmer2017neural, sanchez2019machine} and physics \cite{DBLP:conf/icml/KipfFWWZ18, battaglia2016interaction}, to name a few. 
Most GNN architectures are based on message passing \cite{gilmer2017neural}, where the representation of each node is iteratively updated by aggregating information from its neighbours. 

A crucial difference from traditional neural networks operating on grid-structured data is the absence of canonical ordering of the nodes in a graph. To address this, the aggregation function is constructed to be invariant to neighbourhood permutations and, as a consequence, to graph isomorphism. This kind of symmetry is not always desirable and thus different inductive biases that disambiguate the neighbours have been proposed. For instance, in geometric graphs, such as 3D molecular graphs and meshes, directional biases are usually employed in order to model the positional information of the nodes  \cite{masci2015geodesic,monti2017geometric,bouritsas2019neural,klicpera_dimenet_2020, de2020gauge}; for proteins, ordering information is used to disambiguate amino-acids at different positions in the sequence \cite{ingraham2019generative}; in multi-relational knowledge graphs, a different aggregation is performed for each relation type \cite{schlichtkrull2018modeling}. 

The structure of the graph itself does not usually explicitly take part in the aggregation function. In fact, most models rely on multiple message passing steps as a means for each node to discover the global structure of the graph. However, since message-passing GNNs are at most as powerful as the Weisfeiler Leman test (WL) \cite{xu2018how, morris2019weisfeiler}, they are limited in their abilities to adequately exploit the graph structure, e.g. by  counting  substructures \cite{DBLP:conf/fct/ArvindFKV19, chen2020can}. This uncovers a crucial limitation of GNNs, as substructures have been widely recognised as important in the study of complex networks. For example, in molecular chemistry, functional groups and rings are related to a plethora of chemical properties, while cliques are related to protein complexes in Protein-Protein Interaction networks and community structure in social networks, respectively \cite{Granovetter82thestrength, girvan2002community}. 

Therefore, three major questions arise when designing GNN architectures: (a) How to go beyond \textit{isotropic}, i.e., locally symmetric, aggregation functions ? (b) How to ensure that GNNs are aware of the \textit{structural chatacteristics} of the graph? (c) How to achieve the above two without sacrificing \textit{invariance to isomorphism} and hence the ability of GNNs to generalise?

In this work we attempt to simultaneously provide an answer to the above. We propose to break local symmetries by introducing structural information in the aggregation function, hence addressing (a) and (b). In particular, the contribution of each neighbour (\textit{message}) is transformed differently depending on its structural relationship with the central node. This relationship is expressed by counting the appearance of certain substructures. Since substructure counts are \textit{vertex invariants}, i.e. they are invariant to vertex permutations, it is easy to see that the resulting GNN will be invariant to isomorphism, hence also addressing (c). Moreover, by choosing the substructures, one can provide the model with different inductive biases, based on the graph distribution at hand.

We characterise the expressivity of our message-passing scheme, coined as \textit{Graph Substructure Network (GSN)}, 
showing that GSN is strictly more expressive than traditional GNNs for the vast majority of substructures, while retaining the locality of message passing, as opposed to higher-order methods \cite{DBLP:conf/iclr/MaronBSL19,pmlr-v97-maron19a, maron2019provably, morris2019weisfeiler} that follow the WL hierarchy (see Section \ref{sec:prelims}). 
In the limit, our model can yield a unique representation for every isomorphism class and is thus universal. 
We provide an extensive experimental evaluation on hard instances of graph isomorphism testing (strongly regular graphs), as well as on real-world networks from the social and biological domains, including the recently introduced large-scale benchmarks \cite{dwivedi2020benchmarking, DBLP:ogb}. We observe that when choosing the structural inductive biases based on domain-specific knowledge, \modelname\ achieves state-of-the-art results.
\section{Preliminaries}\label{sec:prelims}

Let $G=(\mathcal{V}_G, \mathcal{E}_G)$ be a graph with vertex set $\mathcal{V}_G$ and edge set $\mathcal{E}_G$, directed or undirected. A subgraph ${G_S = (\mathcal{V}_{G_S}, \mathcal{E}_{G_S})}$ of $G$ is any graph with $\mathcal{V}_{G_S}\subseteq\mathcal{V}_G, \ \mathcal{E}_{G_S}\subseteq\mathcal{E}_G$. When $\mathcal{E}_{G_S}$ includes all the edges of $G$ with endpoints in $\mathcal{V}_{G_S}$, i.e.,  ${\mathcal{E}_{G_S} = \mathcal{E}_G \cap \big( \mathcal{V}_{G_S} \times \mathcal{V}_{G_S}\big)}$,
the subgraph is said to be \textit{induced}. 

\subsection{Isomorphism \& Automoprhism}
Two graphs $G, H$ are {\em isomorphic} (denoted $H\simeq G$), if there exists an adjacency-preserving bijective mapping ({\em isomorphism}) $f: \mathcal{V}_G  \rightarrow \mathcal{V}_H$, i.e., $(v,u) \in \mathcal{E}_G$ iff $(f(v),f(u)) \in \mathcal{E}_H$. Given some small graph $H$, the \textit{subgraph isomorphism} problem amounts to finding a subgraph $G_S$ of $G$ such that $G_S\simeq H$. An \textit{automorphism} of $H$ is an isomorphism that maps $H$ onto itself. The set of all the unique automorphisms forms the \textit{automorphism group} of the graph, denoted as $\mathrm{Aut}(H)$, which contains all the possible symmetries of the graph. 

The automorphism group yields a partition of the vertices into disjoint subsets of $\mathcal{V}_H$ called \textit{orbits}. Intuitively, this concept allows us to group the vertices based on their \textit{structural roles}, e.g. the endpoint vertices of a path, or all the vertices of a cycle (see Figure \ref{fig:method_illustration}). Formally, the
orbit of a vertex $v \in \mathcal{V}_H$ is the set of vertices to which it can be mapped via an automorphism: ${\mathrm{Orb}(v) = \{u \in  \mathcal{V}_H \ |  \ \exists g \in \mathrm{Aut}(H) \,\, \mathrm{s.t.} \,\, g(u) = v \}}$, and the set of all orbits ${H \setminus  \mathrm{Aut}(H) = \{\mathrm{Orb}(v) \ | \ v \in \mathcal{V}_H\}}$ is usually called the \textit{quotient} of the automorphism when it acts on the graph $H$. We are interested in the unique elements of this set that we will denote as $\{O^V_{H,1}, O^V_{H,2}, \dots ,O^V_{H, d_H}\}$, where $d_H$ is the cardinality of the quotient.

Analogously, we define edge structural roles via \textit{edge automorphisms}, i.e., bijective mappings from the edge set onto itself, that preserve edge adjacency (two edges are adjacent if they share a common endpoint). In particular, every vertex automorphism $g$ induces an edge automorphism by mapping each edge $(u,v)$ to $(g(u), g(v))$. \footnote{Note that the edge automorphism group is larger than that of induced automorphisms, but strictly larger only for 3 trivial cases \cite{Whitney1932}. However, induced automorphisms provide a more natural way to express edge structural roles.} In the same way as before, we construct the edge automorphism group, from which we deduce the partition of the edge set in {\em edge orbits} $\{O_{H,1}^E, O_{H,2}^E, \dots ,O_{H, d_H}^E\}$.

\subsection{Weisfeiler-Leman tests}
The {\em Weisfeiler-Leman graph-isomorphism test} \cite{weisfeilerreduction}, also known as naive vertex refinement, \textit{1-WL}, or just \textit{WL}), is a fast heuristic to decide if two graphs are isomorphic or not. The WL test proceeds as follows: every vertex $v$ is initially assigned a colour $c^0_v$ that is later iteratively refined by aggregating neighbouring information: 
\begin{equation}
c^{t+1}_v = \mathrm{HASH}\Big(c^{t}_v, \  \Lbag c^{t}_u\Rbag_{u\in\mathcal{N}_v}\Big),
\end{equation}
where $\Lbag\cdot\Rbag$ denotes a multiset (a set that allows element repetitions) and $\mathcal{N}(v)$ is the neighbourhood of $v$. The WL algorithm terminates when the colours stop changing, and outputs a histogram of colours. Two graphs with different histograms are non-isomorphic; if the histograms are identical, the graphs are possibly, but not necessarily, isomorphic. Note that the neighbour aggregation in the WL test is a form of message passing, and GNNs are the learnable analogue. 

A series of works on improving GNN expressivity mimic the higher-order generalisations of WL, known as $k$-WL and $k$-Folklore WL (WL hierarchy) and operate on $k$-tuples of nodes (see Appendix B.1). The $(k+1)$-FWL is strictly stronger than $k$-FWL, $k$-FWL is as strong as $(k+1)$-WL and 2-FWL is strictly stronger than the simple 1-WL test. 
\section{Graph Substructure Networks}

Graphs consist of nodes (or edges) with repeated structural roles. Thus, it is natural for a neural network to treat them in a similar manner, akin to weight sharing between local patches in CNNs for images \cite{lecun1989backpropagation} or positional encodings in language models for sequential data \cite{DBLP:conf/nips/SukhbaatarSWF15, DBLP:conf/icml/GehringAGYD17, vaswani2017attention}. 
Nevertheless, GNNs are usually unaware of the nodes' different structural roles, since all nodes are treated equally when performing local operations. Despite the initial intuition that the neural network would be able to discover these roles by constructing deeper architectures, it has been shown that GNNs are ill-suited for this purpose and are blind to the existence of structural properties, e.g. triangles or larger cycles \cite{chen2020can, DBLP:conf/fct/ArvindFKV19}. 

To this end, we propose to explicitly encode structural roles as part of message passing, in order to capture richer topological properties. Our method draws inspiration from \cite{Loukas2020What}, where it was shown that GNNs become universal when the nodes in the graph are uniquely identified, i.e when they are equipped with different features. However, it is not clear how to choose these identifiers in a permutation equivariant way.
Structural roles, when treated as identifiers, although not necessarily unique, are not only permutation equivariant, but also more amenable to generalisation due to their repetition across different graphs. Thus, they can constitute a trade-off between uniqueness and generalisation.

    \begin{figure}[!t]
      \centering
      \includegraphics[width=0.7\textwidth, scale=1]{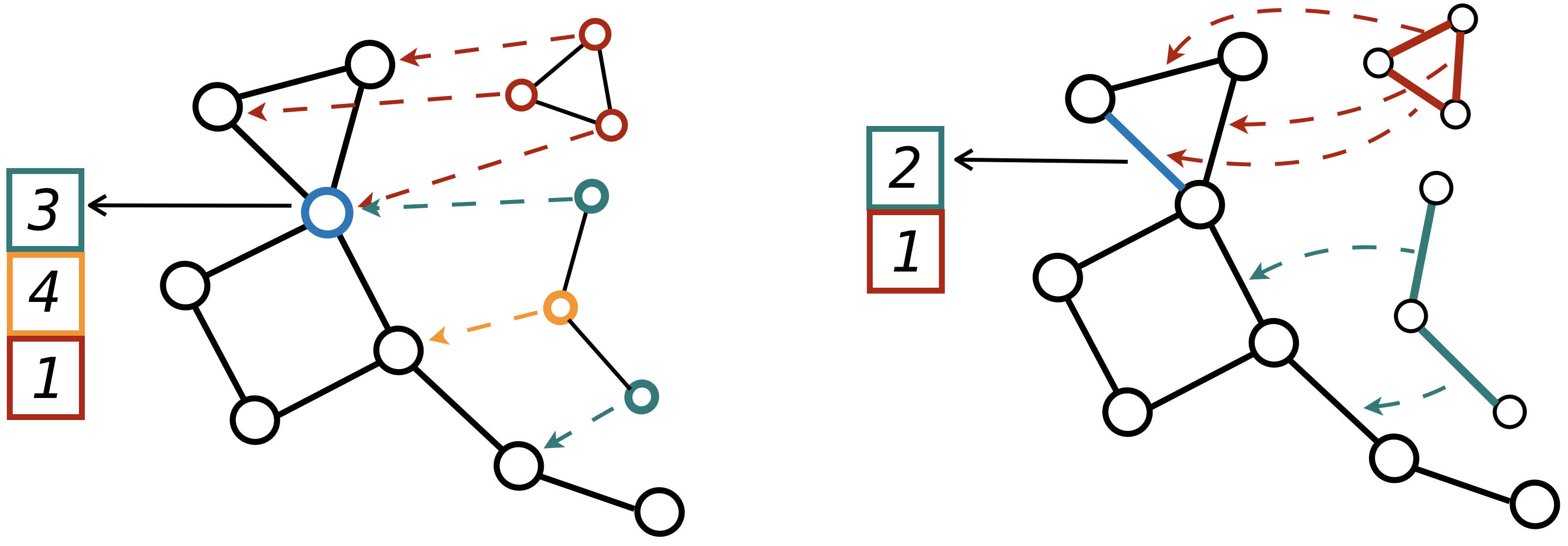}
      \caption{\emph{Node} (left) and \emph{edge} (right) induced subgraph counting for a 3-cycle and a 3-path. Counts are reported for the blue node on the left and for the blue edge on the right. Different colors depict orbits.}
      \label{fig:method_illustration}
    \end{figure}

\subsection{Structural features} Structural roles are encoded into features by counting the appearance of certain substructures. 
Define a set of small (connected) graphs ${\mathcal{H} = \{H_1,H_2\dots H_K\}}$, for example cycles of fixed length or cliques.  For each graph $H \in \mathcal{H}$, we first find its isomorphic subgraphs in $G$ denoted as $G_S$. For each node $v \in \mathcal{V}_{G_S}$ we infer its role w.r.t. $H$ by obtaining the orbit of its mapping $f(v)$ in $H$, $\mathrm{Orb}_H(f(v))$. By counting all the possible appearances of different orbits in $v$, we obtain the {\em vertex structural feature} $\mathbf{x}_{H}^V (v)$ of $v$, defined as follows. For all $i \in \{1,\dots, d_H\}$:

\begin{equation}
       x_{H,i}^V (v) = \bigg| \Big\{G_S \simeq H  \ \big| \ v \in \mathcal{V}_{G_S}, \ f(v) \in O_{H,i}^V \}\bigg|.
\end{equation}

Note that there exist $|\mathrm{Aut}(H)|$ different functions $f$ that can map a subgraph $G_S$ to $H$, but any of those can be used to determine the orbit mapping of each node $v$.
By combining the counts from different substructures in $\mathcal{H}$ and different orbits, we obtain the feature vector 
$\mathbf{x}^V_v = [\mathbf{x}_{H_1}^V(v), \hdots, \mathbf{x}_{H_K}^V(v)] \in \mathbb{N}^{D\times 1}$ of dimension ${D = {\sum_{H_i\in\mathcal{H}}d_{H_i}}}$. 

Similarly, we can define {\em edge structural features} $\mathbf{x}_{H,i}^E (u,v)$ by counting occurrences of edge automorphism orbits:

\begin{equation}
    x_{H,i}^E (u,v) = \bigg| \Big\{G_S \simeq H \ \big| \ (u,v) \in \mathcal{E}_{G_S}, \ (f(u), f(v)) \in O_{H,i}^E \Big\} \bigg|,
\end{equation}
and the combined edge features ${\mathbf{x}^E_{u,v} = [\mathbf{x}_{H_1}^E(u,v), \hdots, \mathbf{x}_{H_K}^E(u,v)]}$. An example of vertex and edge structural features is illustrated in Figure \ref{fig:method_illustration}.

\subsection{Structure-aware message passing} The key building block of our architecture is the graph substructure layer, defined in a general manner as a Message Passing Neural Network (MPNN) \cite{gilmer2017neural}, where now the messages from the neighbouring nodes also contain the structural information. In particular, each node $v$ updates its state $\mathbf{h}^t_v$ by combining its previous state with the aggregated messages:
\begin{align}\label{eq:gsn}
    \mathbf{h}^{t+1}_v &=  \mathrm{UP}^{t+1}\big(\mathbf{h}^{t}_v, \mathbf{m}^{t+1}_v\big)\\
    \mathbf{m}^{t+1}_v &=\left\{
        \begin{array}{l}
         M^{t+1}\bigg(\Lbag(\mathbf{h}^{t}_v, \mathbf{h}^{t}_u, \mathbf{x}^V_v, \mathbf{x}^V_u, \mathbf{e}_{u,v})\Rbag_{u\in \mathcal{N}(v)}\bigg) \ (\textbf{GSN-v})\\
         \quad \quad \quad \quad \quad \quad \text{ or }\\
         M^{t+1}\bigg(\Lbag(\mathbf{h}^{t}_v, \mathbf{h}^{t}_u, \mathbf{x}^E_{u,v}, \mathbf{e}_{u,v})\Rbag_{u\in \mathcal{N}(v)} \bigg) \ (\textbf{GSN-e}),
                 \end{array} 
                 \right.\label{eq:msg_fn}
\end{align}
where $\mathrm{UP}^{t+1}$ is an arbitrary function approximator (e.g. a MLP), $M^{t+1}$ is the neighborhood aggregation function, i.e. an arbitrary function on multisets (e.g., of the form $\sum_{u\in \mathcal{N}(v)} \text{MLP}(\cdot)$) and $ \mathbf{e}_{u,v}$ are the edge features. The two variants, named \textit{GSN-v} and \textit{GSN-e}, correspond to vertex- or edge-counts, respectively, which are analogous to absolute and relative positional encodings in language models \cite{DBLP:conf/naacl/ShawUV18, DBLP:conf/acl/DaiYYCLS19}.

It is important to note here that contrary to identifier-based GNNs \cite{Loukas2020What, sato2020random, clip_ijcai20} that obtain universality at the expense of permutation equivariance (since the identifiers are arbitrarily chosen with the sole requirement of being unique), GSNs retain this property, hence they are by construction ivariant to isomorphism. This stems from the fact that the process generating our structural identifiers (i.e. subgraph isomorphism) is permutation equivariant itself (proof provided in the Appendix A.1).

\begin{figure*}[t]
\centering
\begin{subfigure}{0.45\textwidth}
  \centering
  \includegraphics[width=0.8\textwidth]{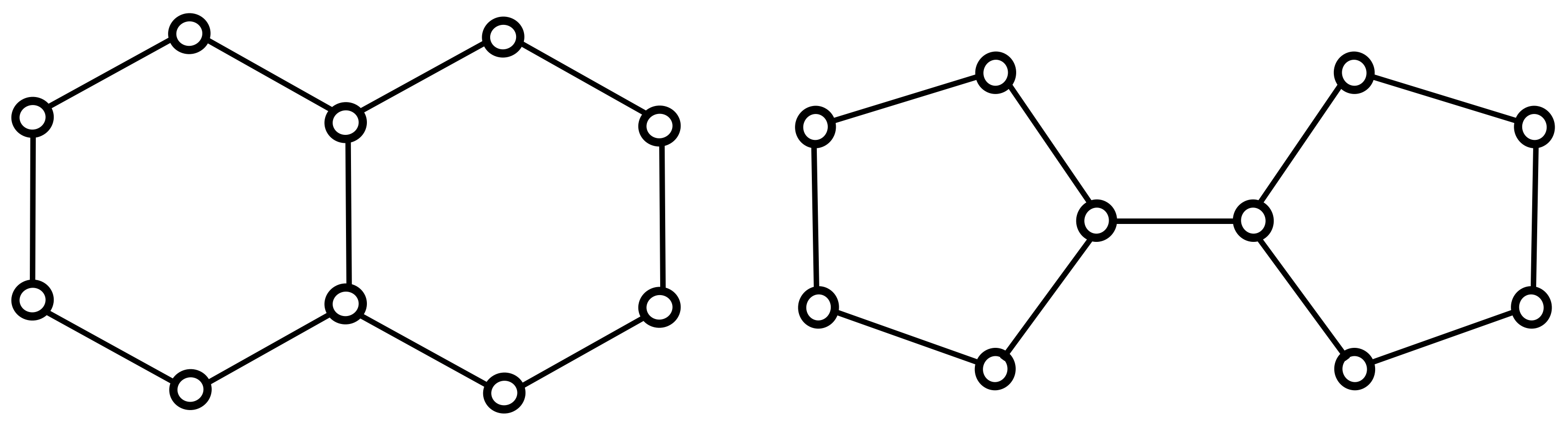}
  \label{fig:molecules}
\end{subfigure}
\hspace{1cm}
\begin{subfigure}{0.45\textwidth}
  \centering
     \includegraphics[width=0.8\textwidth]{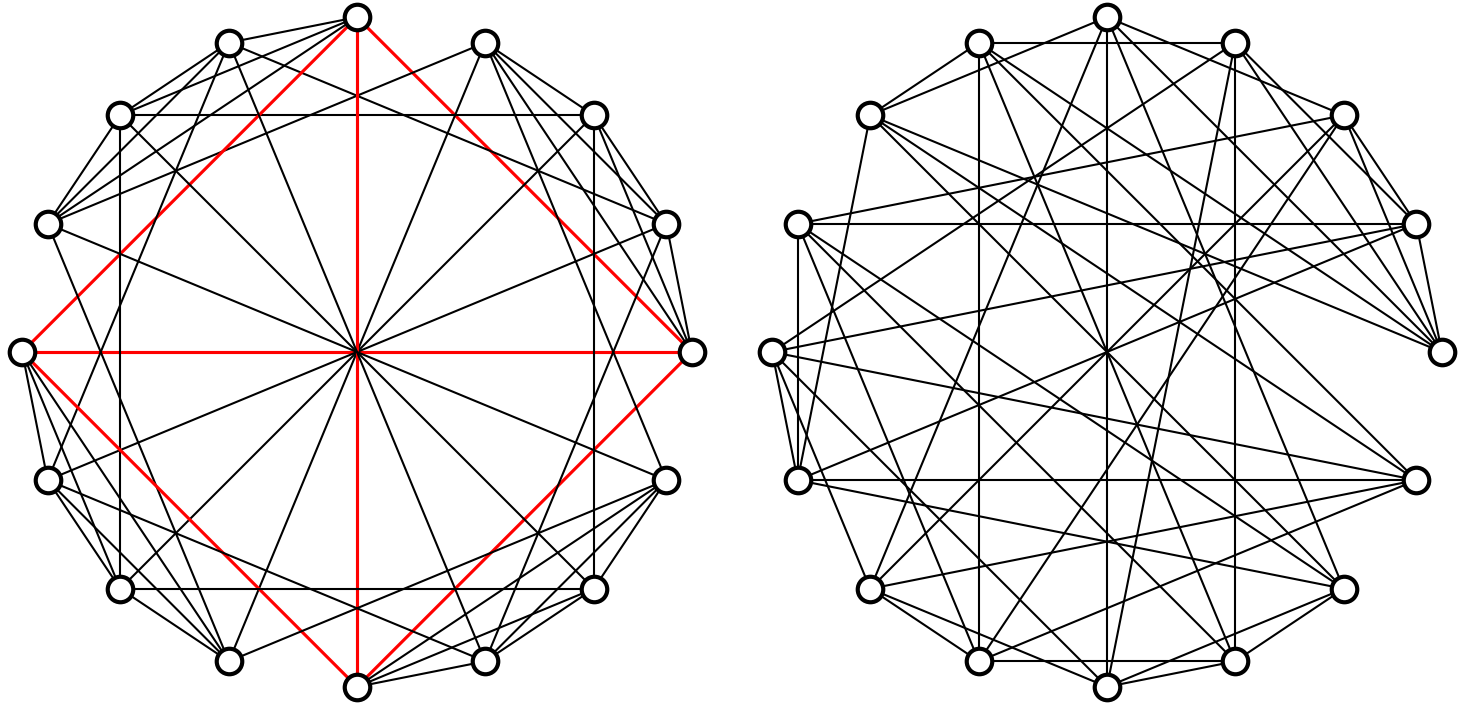}
\end{subfigure}
    \captionsetup[figure]{skip=\abovecaptionskip}
  \caption{(Left) \emph{Decalin} and \emph{Bicyclopentyl}: Non-isomorphic molecular graphs than can be distinguished by \modelname, but not the by the WL test \cite{sato2020survey} (nodes represent carbon atoms and edges chemical bonds).
  (Right) \emph{Rook's 4x4 graph} and the \emph{Shrikhande graph}: the smallest pair of strongly regular non-isomorphic graphs with the same parameters SR(16,6,2,2). \modelname\ can distinguish them with 4-clique counts, while 2-FWL fails.
}
 \label{fig:graph_examples}
\end{figure*}

\subsection{How powerful are \modelname s?}

We now turn to the expressive power of \modelname s in comparison to MPNNs and the WL tests, a key tool for the theoretical analysis of the expressivity of graph neural networks so far. Since GSN is a generalisation of MPNNs, it is easy to see that it is at least as powerful. Importantly, \modelname s have the capacity to learn functions that traditional MPNNs cannot learn. The following observation derives directly from the analysis of the counting abilities of the 1-WL test  \cite{DBLP:conf/fct/ArvindFKV19} and its extension to MPNNs \cite{chen2020can} (for proofs, see Appendices A.2-A.4). 

\begin{theorem}\label{thrm:GSN vs MPNN}
\modelname\ is strictly more powerful than MPNN and the 1-WL test when one of the following holds:
\begin{itemize}
    \item $H$ is any graph except for star graphs of any size, and structural features are inferred by subgraph matching, i.e. we count all subgraphs $G_S \cong H$ for which it holds that $\mathcal{E}_{G_S} \subseteq \mathcal{E}_G$. Or,
    \item $H$ is any graph except for single edges and single nodes,     and structural features are inferred by \textbf{induced} subgraph matching, i.e. we count all subgraphs $G_S \cong H$ for which it holds that $\mathcal{E}_{G_S} = \mathcal{E}_G \cap \big( \mathcal{V}_{G_S} \times \mathcal{V}_{G_S}\big)$.
\end{itemize}
\end{theorem}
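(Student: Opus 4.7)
The plan is to split the theorem into two parts: show that GSN is at least as expressive as MPNN (hence as 1-WL), and then exhibit, for every admissible choice of $H$, a pair of 1-WL-indistinguishable graphs that GSN can separate. The easy direction is immediate from (\ref{eq:gsn})-(\ref{eq:msg_fn}): setting the structural features to constants reduces GSN to a standard MPNN, and by the GIN construction one can choose the MLPs so that each layer simulates one step of 1-WL. Thus the substantive task is strict inequality.

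For strictness I would invoke the known counting limitations of 1-WL. By the results of Arvind \etal, for every connected graph $H$ that is not a star there exist two graphs $G_1, G_2$ which are 1-WL-indistinguishable yet contain different numbers of $H$-subgraphs; the induced-count analogue (Chen \etal) yields the corresponding pair whenever $H$ is not a single vertex or a single edge, the two cases that 1-WL already captures via node and degree counts. To turn such a \emph{global} counting discrepancy into a \emph{local} feature discrepancy usable by GSN, I would exploit the identity
\begin{equation}
    \sum_{v \in \mathcal{V}_G} x_{H,i}^V(v) \;=\; |O_{H,i}^V| \cdot \big|\{G_S \subseteq G \ : \ G_S \simeq H\}\big|,
\end{equation}
which holds because for every copy $G_S$ of $H$ in $G$ exactly $|O_{H,i}^V|$ of its vertices are mapped into the orbit $O_{H,i}^V$ under any isomorphism $f$, independently of the chosen $f$. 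Consequently, whenever the $H$-counts in $G_1$ and $G_2$ differ, the multisets $\Lbag \mathbf{x}^V_v \Rbag_{v\in\mathcal{V}_{G_1}}$ and $\Lbag \mathbf{x}^V_v \Rbag_{v\in\mathcal{V}_{G_2}}$ must also differ.

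Once this initial multiset mismatch is established, I would pick the first-layer UPDATE and aggregation to be injective on multisets (again in the GIN style), so that the vertex colorings produced after a single round of GSN already differ between $G_1$ and $G_2$; any sum-pooling readout then yields distinct graph-level representations, while by hypothesis 1-WL collapses the two graphs. The edge variant GSN-e is treated identically by summing $x_{H,i}^E(u,v)$ over edges and using $|O_{H,i}^E|$, and the induced-subgraph bullet only requires swapping in the induced-matching version of the counting identity together with the induced counterexamples.

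The principal obstacle is pinning down the exceptions precisely: one must verify that star graphs (resp.\ single nodes and single edges) are exactly the substructures whose subgraph (resp.\ induced subgraph) counts are \emph{already} determined by 1-WL, since otherwise the strict separation would fail for these cases. This is exactly what the classification of Arvind \etal\ (and its induced analogue) provide, so I would cite rather than re-derive it. The remaining technicalities — constructing the injective first layer and formally reducing the edge-orbit argument to the vertex-orbit one — are routine and belong in the appendix.
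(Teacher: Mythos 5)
Your proposal is correct and follows essentially the same route as the paper's proof: reduce to an MPNN/GIN-style argument for the "at least as powerful" direction, then invoke the Arvind et al. and Chen et al. characterisations of what 1-WL can count to obtain a 1-WL-indistinguishable pair with differing $H$-counts, and distinguish them by summing the structural features. Your explicit identity $\sum_{v} x_{H,i}^V(v) = |O_{H,i}^V|\cdot|\{G_S \simeq H\}|$ merely makes precise the step the paper states informally as "obtain different representations by summing up the features," so it is a welcome elaboration rather than a different argument.
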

\begin{proof}
It is easy to see that \modelname\ model class contains MPNNs, and is thus at least as expressive.  We can also show that \modelname\ is at least as expressive as the 1-WL test by repurposing the proof of Theorem 3 in \cite{xu2018how} (see Appendix A.2)

 Given the first part of the proposition, in order to show that \modelname s are strictly more expressive than the 1-WL test, it suffices to show that \modelname\ can distinguish a pair of graphs that 1-WL deems isomorphic. \cite{DBLP:conf/fct/ArvindFKV19} showed that 1-WL, and consequently MPNNs, can count only \textit{forests of stars}. Thus, if the subgraphs are required to be connected, then they can only be star graphs of any size (note that this contains single nodes and single edges). In addition, \cite{chen2020can} showed that 1-WL, and consequently MPNNs, cannot count any connected \textbf{induced} subgraph with 3 or more nodes, i.e. any connected subgraph apart from single nodes and single edges. 
 
 If $H$ is a substructure that 1-WL cannot learn to count, i.e. the ones mentioned above, then there is at least one pair of graphs with different number of counts of $H$, that 1-WL deems isomorphic. Thus, by assigning counting features to the nodes/edges of the two graphs based on appearances of $H$, a \modelname \ can obtain different representations for $G_1$ and $G_2$ by summing up the features. Hence, $G_1$, $G_2$ are deemed non-isomorphic. An example is depicted in Figure \ref{fig:graph_examples} (left), where the two  non-isomorphic graphs are distinguishable by GSN via e.g. cycle counting, but not by 1-WL.
\end{proof}

\noindent\textbf{Universality. } A natural question that emerges is what are the sufficient conditions under which GSN can solve graph isomorphism. This would entail that GSN is a universal approximator of functions defined on graphs \cite{clip_ijcai20, chen2019equivalence}. To address this, we can examine whether there exists a specific substructure collection that can completely characterise each graph. As of today, we are not aware of any results in graph theory that can guarantee the reconstruction of a graph from a smaller collection of its subgraphs. However, the \textit{Reconstruction Conjecture} \cite{kelly1957congruence, ulam1960collection}, states that a graph with size $n\geq3$ can be reconstructed from its vertex-deleted subgraphs (proven for $n\leq 11$ \cite{mckay1997small}). Consequently, (proof in the Appendix A.3):
\begin{corollary}\label{thrm:recon_conj}
If the Reconstruction Conjecture holds and the substructure collection $\mathcal{H}$ contains all graphs of size $k= n-1$, then GSN can distinguish all non-isomorphic graphs of size $n$ and is therefore universal. 
\end{corollary}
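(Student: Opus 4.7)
The plan is to show that a GSN equipped with $\mathcal{H} = \{H : |\mathcal{V}_H| = n-1\}$ under induced-subgraph counting can recover, as an intermediate quantity, the full deck $\{\!\{ G - w : w \in \mathcal{V}_G \}\!\}$ of its input graph $G$. Once this is established, the Reconstruction Conjecture (assumed to hold for order $n \geq 3$) immediately implies that two non-isomorphic graphs on $n$ vertices produce distinct deck-vectors, hence distinct GSN embeddings. Universality on the finite class of $n$-vertex graphs then follows by the standard argument that any isomorphism-invariant target function factors through the (now injective) graph embedding and can be approximated by composing with a sufficiently expressive MLP readout, in the spirit of universality results for GIN-style architectures.

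The first step I would record is a simple counting identity. Since $|\mathcal{V}_G| = n$, every induced subgraph of $G$ on $n-1$ vertices arises as $G - w$ for exactly one vertex $w \in \mathcal{V}_G$. Consequently, for each graph $H$ on $n-1$ vertices, the number of induced copies of $H$ in $G$, say $N_H(G)$, equals the multiplicity of the isomorphism class of $H$ in the deck of $G$, so the vector $(N_H(G))_{H \in \mathcal{H}}$ carries exactly the same information as the deck. The second step is to verify that a GSN has access to this vector. A plain sum-readout over the initial structural features gives $\sum_{v \in \mathcal{V}_G} x^V_{H,i}(v) = |O^V_{H,i}| \cdot N_H(G)$, because each induced copy $G_S \simeq H$ contributes exactly $|O^V_{H,i}|$ to the sum, one term per vertex of $G_S$ whose image under the matching isomorphism lies in the orbit $O^V_{H,i}$. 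Dividing by the known orbit size (or invoking injective multiset aggregation as in the GIN construction) extracts $N_H(G)$ exactly for every $H \in \mathcal{H}$.

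Combining the two ingredients, non-isomorphic $n$-vertex graphs must differ in at least one count $N_H$ and therefore produce distinct \modelname\ embeddings, which yields both isomorphism-testing completeness on $n$-vertex graphs and, via the injective-embedding-plus-MLP argument, universality. The main obstacle is essentially conceptual rather than technical: one needs to cleanly separate the genuine content of the Reconstruction Conjecture, which does all the work of converting subgraph-count information into graph identity, from the mechanical claim that \modelname's aggregation pools per-vertex orbit counts into global subgraph counts. Beyond the counting identity and the standard appeal to injective multiset aggregation already used in the proof of Theorem~\ref{thrm:GSN vs MPNN}, no additional technical machinery is required.
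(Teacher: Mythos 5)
Your proposal is correct and follows essentially the same route as the paper's proof: both recover the deck of $G$ from a sum-readout of the vertex structural features and then invoke the Reconstruction Conjecture plus an injective-embedding argument for universality. The only cosmetic difference is the normalisation — you divide the per-orbit sum by $|O^V_{H,i}|$, while the paper first sums over all orbits (using that each vertex lies in exactly one orbit) and divides by $n-1$; these are equivalent ways of extracting $N_H(G)$.
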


\noindent\textbf{GSN-v vs GSN-e. } We can also examine the expressive power of the two proposed variants. A crucial observation that we make is that for each graph $H$ in the collection, the vertex structural identifiers can be reconstructed by the corresponding edge identifiers. Thus, we can show that for every GSN-v there exists a GSN-e that can simulate the behaviour of the former (proof provided in the Appendix A.4).

\begin{theorem}\label{thrm: GSN-v vs GSN-e}
For a given subgraph collection $\mathcal{H}$, let $C^V$ the set of functions that can be expressed by a GSN-v with arbitrary depth and with, and $C^E$ the set of functions that can be expressed by a GSN-e with the same properties. Then, it holds that $C^E \supseteq C^V$, or in other words GSN-e is at least as expressive as GSN-v.
\end{theorem}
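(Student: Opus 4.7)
The plan is to show that any function computable by GSN-v can also be computed by a slightly deeper GSN-e, by reconstructing the vertex structural features $\mathbf{x}^V_v$ from the data visible to a GSN-e layer --- namely the multiset $\Lbag \mathbf{x}^E_{u,v} \Rbag_{u \in \mathcal{N}(v)}$, whose cardinality is $d(v) = |\mathcal{N}(v)|$. If this reconstruction can be carried out inside a constant number of GSN-e layers, the remaining layers can simulate the corresponding GSN-v layers verbatim.

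The first step is a double-counting identity. Fix $H \in \mathcal{H}$; since the action of $\mathrm{Aut}(H)$ on edges is compatible with its action on vertices, the number of edges in orbit $O^E_{H,j}$ incident to a vertex of $O^V_{H,i}$ is independent of the chosen representative. Call this structural invariant $n_{j,i}$. Each subgraph $G_S \simeq H$ with $v$ mapped to $O^V_{H,i}$ contributes $n_{j,i}$ to $\sum_{u \in \mathcal{N}(v)} x^E_{H,j}(u,v)$, yielding
\[
\sum_{u \in \mathcal{N}(v)} x^E_{H,j}(u,v) \;=\; \sum_{i} n_{j,i}\, x^V_{H,i}(v)
\]
for every edge orbit index $j$. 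This exhibits the vector of incident edge-count sums as a linear image of the vector of vertex counts.

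The second step is to invert this relationship and express $\mathbf{x}^V_v$ as a deterministic function of the multiset together with $d(v)$. When the incidence matrix $N = (n_{j,i})$ has full column rank, a direct linear solve suffices. In the rank-deficient cases --- the canonical example being star substructures $S_n$, which have a single edge orbit but two vertex orbits --- the argument must use the full multiset rather than only its sum, along with $d(v)$: for instance, $x^V_{S_n,\mathrm{center}}(v) = \binom{d(v)}{n}$ and $x^V_{S_n,\mathrm{leaf}}(v) = \sum_{u \in \mathcal{N}(v)} \binom{d(u)-1}{n-1}$ can both be read off the multiset after subtracting a $v$-dependent offset from each element. This inversion is the main obstacle: it requires either a case analysis over the possible shapes of $H$, or a clean structural lemma guaranteeing that the map from the incident edge-feature multiset to $\mathbf{x}^V_v$ is well defined for every $H \in \mathcal{H}$.

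The third and final step invokes universal approximation of permutation-invariant multiset functions to instantiate $M^1$ and $\mathrm{UP}^1$ so that the first GSN-e layer computes this inverse map and stores $\mathbf{x}^V_v$ inside $\mathbf{h}^1_v$. Since the GSN-e message function always re-feeds $\mathbf{h}^t_v$ and $\mathbf{h}^t_u$ into the aggregator, subsequent layers can retrieve both $\mathbf{x}^V_v$ and $\mathbf{x}^V_u$ and thus replicate any GSN-v update. This exhibits, for every $T$-layer GSN-v, an equivalent $(T+1)$-layer GSN-e, establishing $C^E \supseteq C^V$.
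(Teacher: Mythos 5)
Your overall strategy is the same as the paper's: recover $\mathbf{x}^V_v$ from the incident edge identifiers in one extra first GSN-e layer (using the universality of the multiset function $M$), store it alongside $\mathbf{h}_v$, and let the remaining layers replicate the GSN-v computation. Step 1 (the double-counting identity) and Step 3 are sound and match the paper. The problem is Step 2, and you name it yourself: the inversion of the incidence relation is exactly the step your argument does not establish. Whenever the matrix $N=(n_{j,i})$ is rank-deficient you fall back on an ad hoc treatment of stars and then appeal to ``a case analysis over the possible shapes of $H$, or a clean structural lemma'' that you never supply. Stars are not the only rank-deficient case (any $H$ with more vertex orbits than edge orbits, e.g.\ a $5$-vertex path with three vertex orbits and two edge orbits, produces a wide $N$), so as written the proof does not cover arbitrary $\mathcal{H}$. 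Moreover, your star formula $x^V_{S_n,\mathrm{leaf}}(v)=\sum_{u\in\mathcal{N}(v)}\binom{d(u)-1}{n-1}$ invokes the neighbours' degrees $d(u)$, which are not among the inputs available to the first GSN-e layer; they are only recoverable through the offset trick you allude to, which again is substructure-specific rather than general.

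The paper closes precisely this gap by never collapsing the edge identifiers to unordered per-edge-orbit scalars: it indexes the edge counts by the \emph{ordered} pair of endpoint orbits, writing $x^E_{ij}(u,v)$ for the number of matched copies of $H$ in which the edge is used with $f(v)\in O^V_i$ and the other endpoint in $O^V_j$ (the appendix explicitly remarks that this directionality is essential). With that convention, a matched copy mapping $v$ into $O^V_i$ contributes exactly $\mathrm{deg}(O^V_i)$ to $\sum_{u\in\mathcal{N}(v)}\sum_{j:O^V_j\in\mathcal{N}(O^V_i)}x^E_{ij}(u,v)$, while copies mapping $v$ elsewhere contribute nothing to the $i$-indexed counts, so $x^V_i(v)=\frac{1}{\mathrm{deg}(O^V_i)}\sum_{u\in\mathcal{N}(v)}\sum_{j:O^V_j\in\mathcal{N}(O^V_i)}x^E_{ij}(u,v)$ holds for \emph{every} substructure $H$, stars included; the linear system you are trying to invert becomes diagonal with entries $\mathrm{deg}(O^V_i)$ and no rank condition or case analysis is needed. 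If you adopt this ordered-orbit bookkeeping (or prove the structural lemma you postulate), the rest of your argument goes through essentially verbatim and yields the $(T+1)$-layer simulation exactly as in the paper.
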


\noindent\textbf{Comparison with higher-order WL tests. } Finally, the expressive power of GSN can be compared to higher-order versions of the WL test. In particular, for each $k$-th order Folklore WL test in the hierarchy, it is known that there exists a family of graphs that will make the test fail. These are known in the literature as $k$-isoregular graphs \cite{douglas2011weisfeiler}, and the most well-known example is the \textit{Strongly Regular} (SR) graph family, for $k=2$ (more details can be found in Appendix B). 

Hence, if we can find a substructure collection that allows GSN to distinguish certain pairs from these families, then this guarantees that the corresponding $k$-FWL test is no stronger than GSN. In this work, we identify such counterexamples for the 2-FWL test. Formally:

\begin{proposition}\label{thrm:GSN vs 2-FWL}
There exist substructure families with $\mathcal{O}(1)$ size, i.e., independent of the size of the graph $n$, such that 2-FWL is no stronger than GSN.
\end{proposition}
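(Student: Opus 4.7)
The plan is to exhibit a concrete pair of non-isomorphic graphs on which 2-FWL fails but \modelname{} succeeds, using a substructure collection whose size does not depend on the vertex count $n$. A natural candidate, already illustrated in Figure \ref{fig:graph_examples}, is the pair consisting of the Rook's $4\times 4$ graph $R_{4,4}$ and the Shrikhande graph $S$, both strongly regular with parameters SR$(16,6,2,2)$. As recalled in Appendix B, 2-FWL cannot separate two strongly regular graphs sharing the same parameters, so in particular this pair is 2-FWL-equivalent.

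I would then take $\mathcal{H} = \{K_4\}$ as the substructure collection: $|\mathcal{V}_{K_4}| = 4$ is constant in $n$, so both the number and the size of the substructures are $\mathcal{O}(1)$. Since $K_4$ is vertex-transitive, its automorphism group has a single vertex orbit, and the vertex structural feature $\mathbf{x}^V_v$ collapses to one integer: the number of $K_4$'s containing $v$. The main combinatorial step is then to compute these counts on both graphs. In $R_{4,4}$ the maximal cliques are precisely the four rows and the four columns of the $4 \times 4$ grid, so each vertex belongs to exactly two $K_4$'s, giving $\mathbf{x}^V_v = 2$ uniformly. In the Shrikhande graph, it is a well-known fact that no $K_4$ occurs (the two common neighbours of any adjacent pair are themselves non-adjacent), so $\mathbf{x}^V_v = 0$ uniformly. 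Feeding these identifiers as initial features, even a sum-based graph-level readout yields distinct outputs for $R_{4,4}$ and $S$, so \modelname-v with $\mathcal{H} = \{K_4\}$ separates this pair, and by Theorem \ref{thrm: GSN-v vs GSN-e} so does \modelname-e.

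The main obstacle is justifying the $K_4$-free property of $S$: the SR parameters alone do not imply it, and it has to be verified either through an explicit description of $S$ (for instance as the Cayley graph of $\mathbb{Z}_4 \times \mathbb{Z}_4$ with a suitable connection set, where one can directly check that no four elements are pairwise adjacent) or by citing the standard enumeration of strongly regular graphs. Once this fact is in hand the argument is essentially immediate, and the same blueprint, choosing a small substructure whose per-vertex counts differ between a cospectral SR pair, would produce further witnesses if one wanted to strengthen the statement.
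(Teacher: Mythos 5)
Your proposal is correct and follows essentially the same route as the paper: the paper also establishes that 2-FWL cannot separate any two strongly regular graphs with identical parameters (Appendix B.2) and then exhibits the Rook's $4\times 4$ / Shrikhande pair as being distinguished by $4$-clique counts. Your version is in fact slightly more self-contained, since you explicitly compute the per-vertex $K_4$ counts ($2$ versus $0$, using that the Shrikhande graph is locally $C_6$ and hence $K_4$-free) where the paper defers to its figure and empirical study.
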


We provide numerous counterexamples that prove this claim. Figure  \ref{fig:graph_examples} (right) provides a typical pair of SR graphs that can be distinguished with a 4-clique, while in section \ref{sec:synthetic} this is extended to a large-scale study, where other constant size substructures (paths, cycles and cliques) can achieve similar results. 

Although it is not clear if there exists a certain substructure collection that results in GSNs that align with the WL hierarchy, we stress that this is not a necessary condition in order to design more powerful GNNs. In particular, despite the increase in expressivity, k-WL tests are not only more computationally involved, but they also process the graph in a non-local fashion. However, locality is presumed to be a strong inductive bias of GNNs and key to their excellent performance in real-world scenarios.

\begin{figure*}[t!]
     \centering
     \begin{subfigure}[t]{0.33\textwidth}
         \centering
         \includegraphics[width=\textwidth]{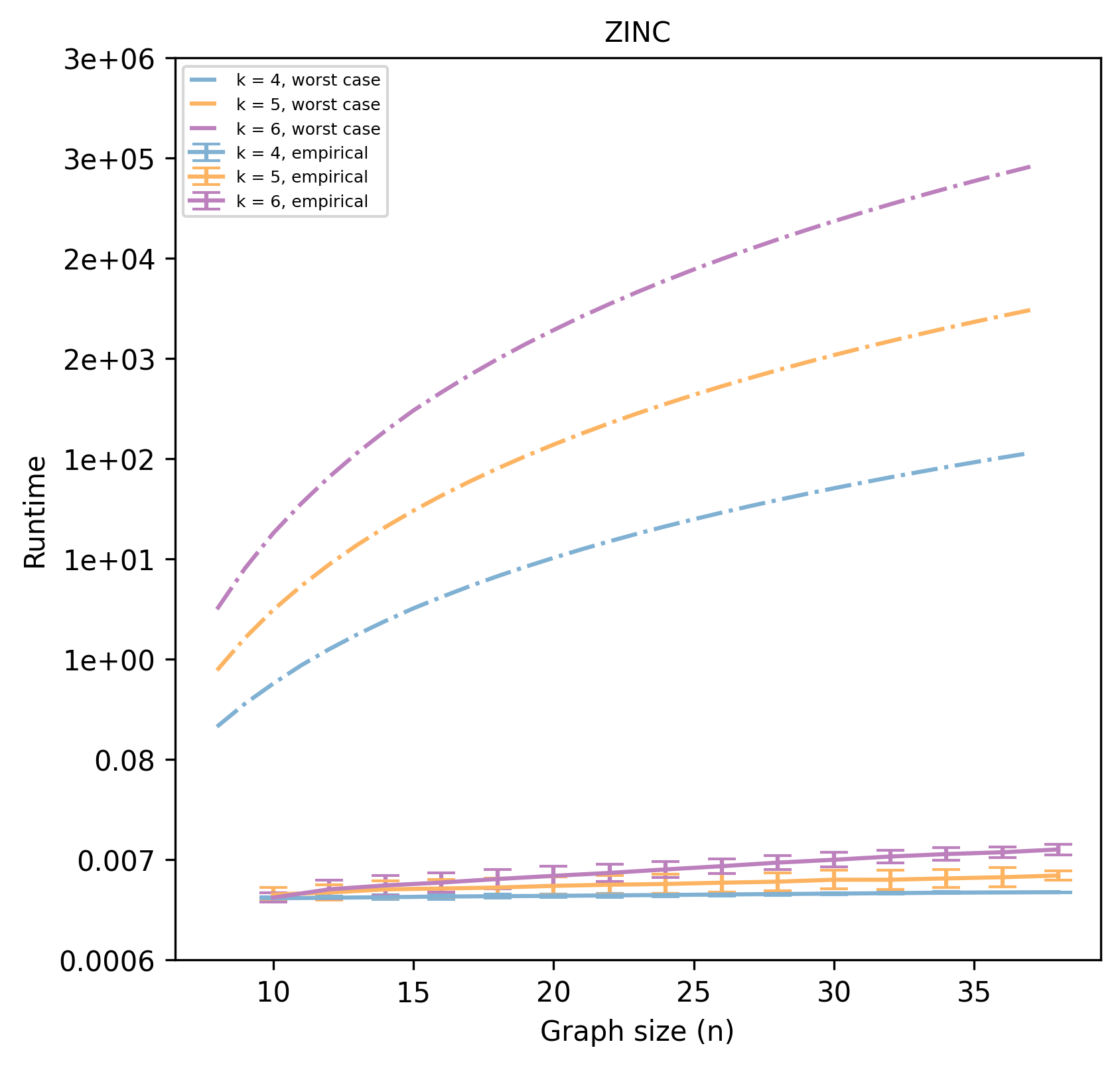}
         \label{fig:y equals x}
     \end{subfigure}
     \begin{subfigure}[t]{0.32\textwidth}
         \centering
         \includegraphics[width=\textwidth]{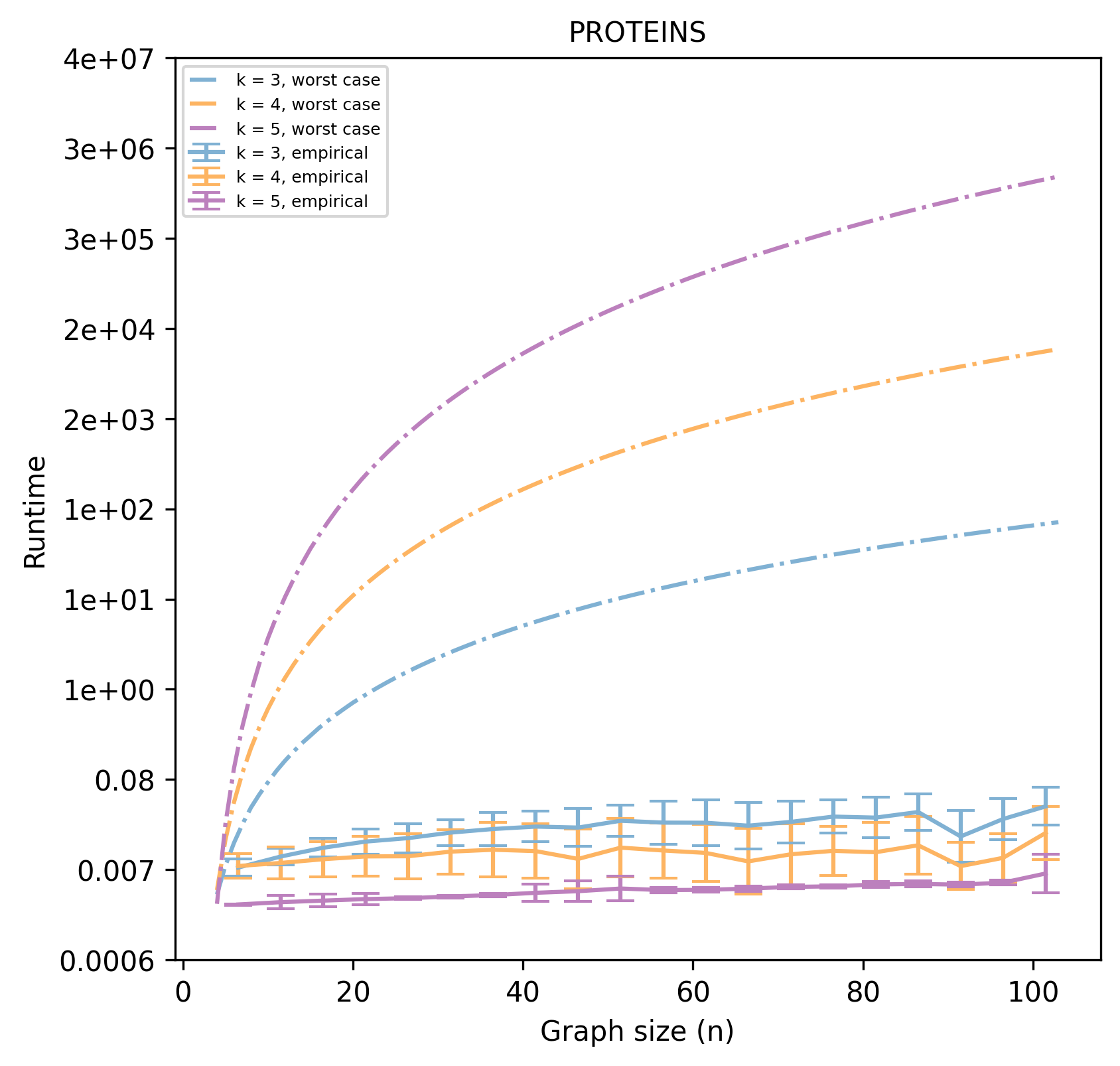}
         \label{fig:three sin x}
     \end{subfigure}
     \begin{subfigure}[t]{0.33\textwidth}
         \centering
         \includegraphics[width=\textwidth]{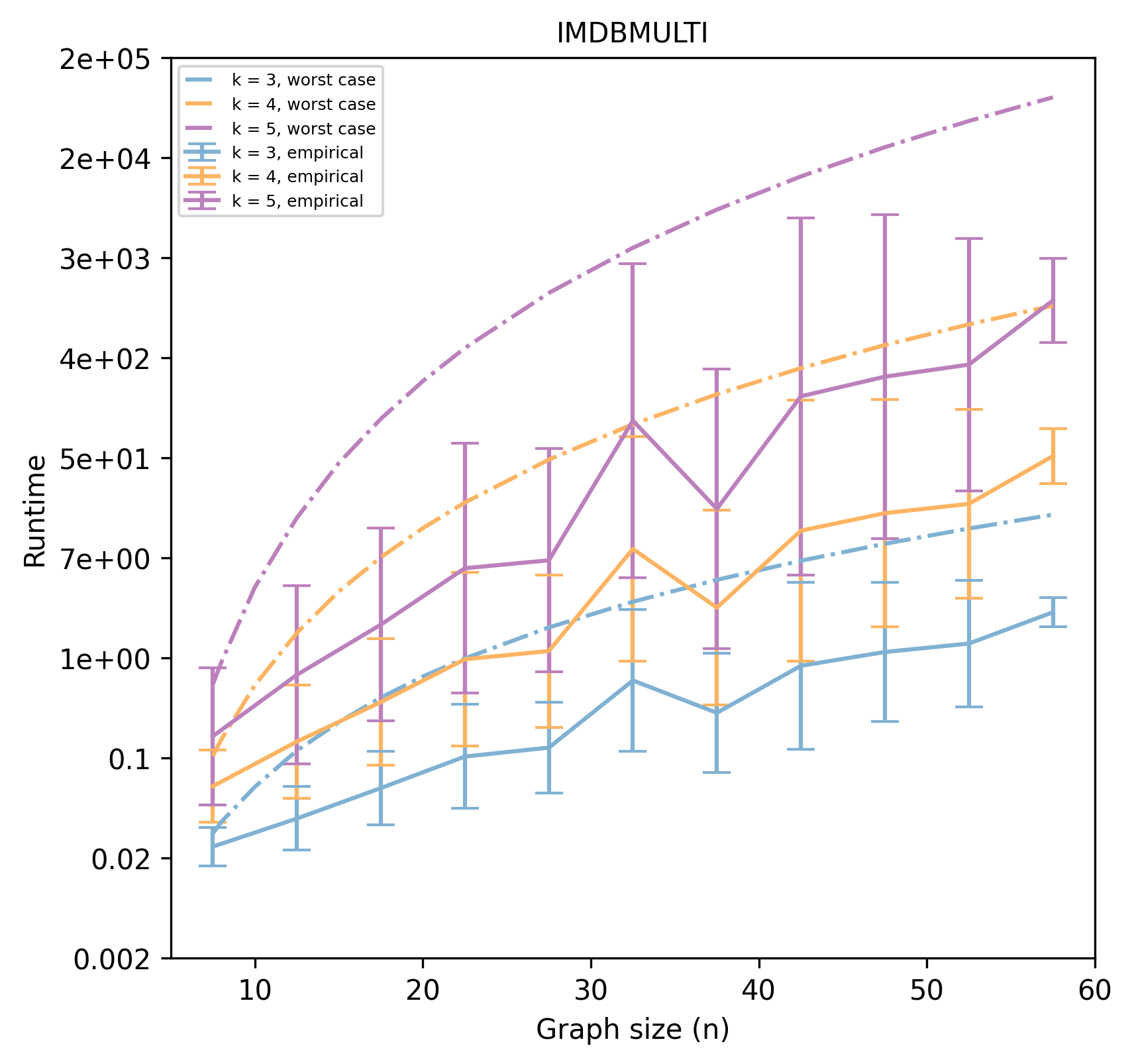}
         \label{fig:five over x}
     \end{subfigure}
        \caption{Empirical (solid) vs worst case (dashed) runtime for different graph distributions (in seconds, log scale). For each distribution we count the best performing (and frequent) substructures of increasing sizes $k$. 
        Computational complexity for real-life graphs is significantly better than the worst case. 
        }
        \label{fig:complexity}
\end{figure*}

\subsection {How to choose the substructures?}
\textbf{Expressivity. } The Reconstruction Conjecture provides a sufficient, albeit impractical condition for universality. This motivates us to analyse the constant size case $k=\mathcal{O}(1)$ for practical scenarios, similar to the argument put forward for hard instances of graph isomorphism (Proposition \ref{thrm:GSN vs 2-FWL}).

In particular, one can count only the most discriminative subgraphs, i.e. the ones that can achieve the maximum possible vertex disambiguation, similarly to identifier-based approaches. Whenever these subgraph counts can provide a unique identification of the vertices, then universality will also hold (Corollary 3.1. in \cite{Loukas2020What}).

We conjecture, that in real-world scenarios the number of subgraphs needed for unique, or near-unique identification, are far fewer than those dictated by Corollary \ref{thrm:recon_conj}. This is consistent with our experimental findings, where we observed that certain small substructures such as paths and trees, significantly improve vertex disambiguation, compared to the initial vertex features (see Figure \ref{fig:ablation} (left) and Table 2 in the appendix). As expected this allows for better fitting of the training data, which validates our claim that GNN expressivity improves. 

\medskip

\noindent\textbf{Generalisation. } However, none of the above claims can guarantee good generalisation in unseen data. For example, in Figure \ref{fig:ablation},  we observe that the test set performance does not follow the same trend with train performance when choosing substructures with strong vertex disambiguation. Aiming at better generalisation, it is desirable to make use of substructures for which there is prior knowledge of their importance in certain network distributions and have been observed to be intimately related to various properties. For example, small substructures (graphlets) have been extensively analysed in protein-protein interaction networks \cite{prvzulj2004modeling}, triangles and cliques characterise the structure of ego-nets and social networks in general \cite{Granovetter82thestrength}, simple cycles (rings) are central in molecular distributions, directed and temporal motifs have been shown to explain the working mechanisms of gene regulatory networks, biological neural networks, transportation networks and food webs \cite{milo2002network, DBLP:conf/wsdm/ParanjapeBL17, benson2016higher}. 

In Figure \ref{fig:ablation} (right), we showcase the importance of these inductive biases: a cycle-based GSN predicting molecular properties achieves smaller generalisation gap compared to a traditional MPNN, while at the same time generalising better with less training data. Choosing the best substructure collection is still an open problem that does not admit a straightforward solution due to its combinatorial nature. Alternatively, various heuristics can be used, e.g., motif frequencies or feature selection strategies.
Answering this question is left for future work.

\subsection{Complexity} 
The complexity of GSN comprises two parts: precomputation (substructure counting) and training/testing. The key appealing property is that training and inference are linear w.r.t the number of edges, $\mathcal{O}(|\mathcal{E}|)$, as opposed to higher-order methods \cite{maron2019provably, morris2019weisfeiler} with $\mathcal{O}(n^k)$, and \cite{vignac2020building} with $\mathcal{O}(n^2)$ training complexity and relational pooling \cite{DBLP:conf/icml/Murphy0R019} with $\mathcal{O}(n!)$ training complexity in absence of approximations. 

The worst-case complexity of subgraph isomorphism of fixed size $k$ is $\mathcal{O}(n^k)$, by examining all the possible $k$-tuples in the graph. However, for specific types of subgraphs, such as paths and cycles, the problem can be solved even faster (see e.g. \cite{DBLP:journals/algorithmica/GiscardKW19}). Approximate counting algorithms are also widely used, especially for counting frequent network motifs \cite{DBLP:journals/bioinformatics/KashtanIMA04, DBLP:conf/wabi/Wernicke05, DBLP:journals/tcbb/Wernicke06, DBLP:journals/bioinformatics/WernickeR06}, and can provide a considerable speed-up. Furthermore, recent neural approaches \cite{ying2020frequent, ying2020neural} provide fast approximate counting. 

In our experiments, we performed exact counting using the common isomorphism algorithm VF2 \cite{DBLP:journals/pami/CordellaFSV04}. Although its worst case complexity is $\mathcal{O}(n^k)$, it scales better in practice, for instance when the candidate subgraph is infrequently matched or when the graphs are sparse, and is also trivially parallelisable. In Figure \ref{fig:complexity}, we show a quantitative analysis of the empirical runtime of the counting algorithm against the worst case, for three different graph distributions: molecules, protein contact maps, social networks. It is easy to see that when the graphs are sparse (for the first two cases) and the number of matches is small, the algorithm is significantly faster than the worst case, while it scales better with the size of the graph $n$. Even, in the case of social networks, where several examples are near-complete graphs, both the runtime and the growth w.r.t both $n$ and $k$ are  better than the worst case. Overall, the preprocessing computational burden in most of the cases remains negligible for relatively small and sparse graphs, as it is the case of molecules. 

\section{Related Work}

\subsection{Expressive power of GNNs}

\noindent\textbf{WL hierarchy. }The seminal results in the theoretical analysis of the expressivity of GNNs \cite{xu2018how} and k-GNNs \cite{morris2019weisfeiler} established that traditional message passing-based GNNs are at most as powerful as the 1-WL test. \cite{chen2019equivalence} showed that graph isomorphism is equivalent to universal invariant function approximation. Higher-order Invariant Graph Networks (IGNs) have been studied in a series of works \cite{DBLP:conf/iclr/MaronBSL19, pmlr-v97-maron19a, maron2019provably, chen2019equivalence, keriven2019universal, puny2020graph}, establishing connections with the WL hierarchy, similarly to \cite{morris2019weisfeiler, MorrisNeurips2020}. The main drawbacks of these methods are the training and inference time complexity and memory requirements of $\mathcal{O}(n^k)$ and the super-exponential number of parameters (for linear IGNs) making them impractical, as well as their non-local nature making them more prone to overfitting. Finally, \cite{DBLP:Garg2020icml} also analysed the expressive power of MPNNs and other more powerful variants and provided generalisation bounds.

\medskip

\noindent\textbf{Unique identifiers. } From a different perspective, \cite{sato2019approximation} and \cite{Loukas2020What} showed the connections between GNNs and distributed local algorithms \cite{angluin1980local, linial1992locality, DBLP:conf/stoc/NaorS93} and suggested more powerful alternatives based on either local orderings or unique global identifiers (in the form of random features in \cite{sato2020random}) that make GNNs universal. Similarly, \cite{clip_ijcai20} propose to use random colorings in order to uniquely identify the nodes. However, these methods lack a principled permutation equivariant way to choose orderings/identifiers. To date this is an open problem in graph theory called \textit{graph canonisation} and it is at least as hard as solving graph isomorphism itself. A possible workaround is proposed in \cite{DBLP:conf/icml/Murphy0R019}, where the authors take into account all possible vertex permutations. However, obviously this quickly becomes intractable ($\mathcal{O}(n!)$) even when considering small-sized graphs. 

\medskip

\noindent\textbf{More expressive permutation equivariant GNNs. } 
Concurrently with our work, other more expressive GNNs have been proposed 
using equivariant message passing. In \cite{de2020natural}, the authors propose to linearly transform each message with a different kernel based on the local isomorphism class of the corresponding edge (similar to our definition of structural roles). However, as also noted by the authors, taking into account all possible local isomorphism classes leads to insufficient weight sharing and hence to overfitting. In contrast, in GSN, usually the substructure collection is small ($\mathtt{\sim}$5-10 graphs) and the substructures are repetitive in the graph distribution, and as a result generalisation improves. Vignac et al. \cite{vignac2020building} propose a message passing scheme where matrices of order equal to the size of the graph are propagated instead of vectors. This can be perceived as a practical unique identification scheme, but the neural network complexity becomes quadratic in the number of nodes. Finally, \cite{li2020distance} and \cite{beaini2020directional} enhance the aggregation function with
distance encodings (a strategy more relevant for vertex-level tasks) and graph eigenvectors respectively as alternative symmetry breaking mechanisms. In the experimental section, GSN is compared against these methods in real-world scenarios.
\medskip

\noindent\textbf{Quantifying expressivity. } 
Solely quantifying the  expressive power of GNNs in terms of their ability to distinguish non-isomorphic graphs does not provide the necessary granularity: even the 1-WL test can distinguish almost all (in the probabilistic sense) non-isomorphic graphs \cite{babai1980random}. As a result, there have been several efforts to analyse the power of $k$-WL tests in comparison to other graph invariants \cite{furer2010power, furer2017combinatorial, DBLP:conf/fct/ArvindFKV19, DBLP:conf/icalp/DellGR18}, while recently \cite{chen2020can} approached GNN expressivity by studying their ability to count substructures.

\subsection{Substructures in Complex Networks. }
The idea of analysing complex networks based on small-scale topological characteristics dates back to the 1970's and the notion of triad census for directed graphs \cite{holland1976local}. The seminal paper of \cite{milo2002network} coined the term \textit{network motifs} as over-represented subgraph patterns that were shown to characterise certain functional properties of complex networks in systems biology.  The closely related concept of  \textit{graphlets} \cite{prvzulj2004modeling, prvzulj2007biological, milenkovic2008uncovering, sarajlic2016graphlet}, different from motifs in being induced subgraphs, has been used to analyse the distribution of real-world networks and as a topological signature for network similarity. Our work is similar in spirit with the {\em graphlet degree vector} (GDV) \cite{prvzulj2007biological}, a node-wise descriptor based on graphlet counting. 

Substructures have been also used in the context of ML. In particular, subgraph patterns have been used to define Graph Kernels (GKs) \cite{horvath2004cyclic, shervashidze2009efficient, costa2010fast, DBLP:conf/icml/KriegeM12, nt2020graph}, with the most prominent being the graphlet kernel \cite{shervashidze2009efficient}. Motif-based node embeddings \cite{motif2vec, rossi-HONE-arxiv} and diffusion operators  \cite{motifnet, sankar2019meta, lee19-motif-attention}  that employ adjacency matrices weighted according to motif occurrences, have recently been proposed for graph representation learning. Our formulation provides a unifying framework for these methods and it is the first to analyse their expressive power. Finally, GNNs that operate in larger induced neighbourhoods \cite{li2019hierarchy, kim2019near} or higher-order paths \cite{flam2020neural} have prohibitive complexity since the size of these neighbourhoods typically grows exponentially. 
\section{Experimental Evaluation}\label{sec:experiments}

\begin{figure}[t]
        \centering
         \includegraphics[width=0.6\textwidth]{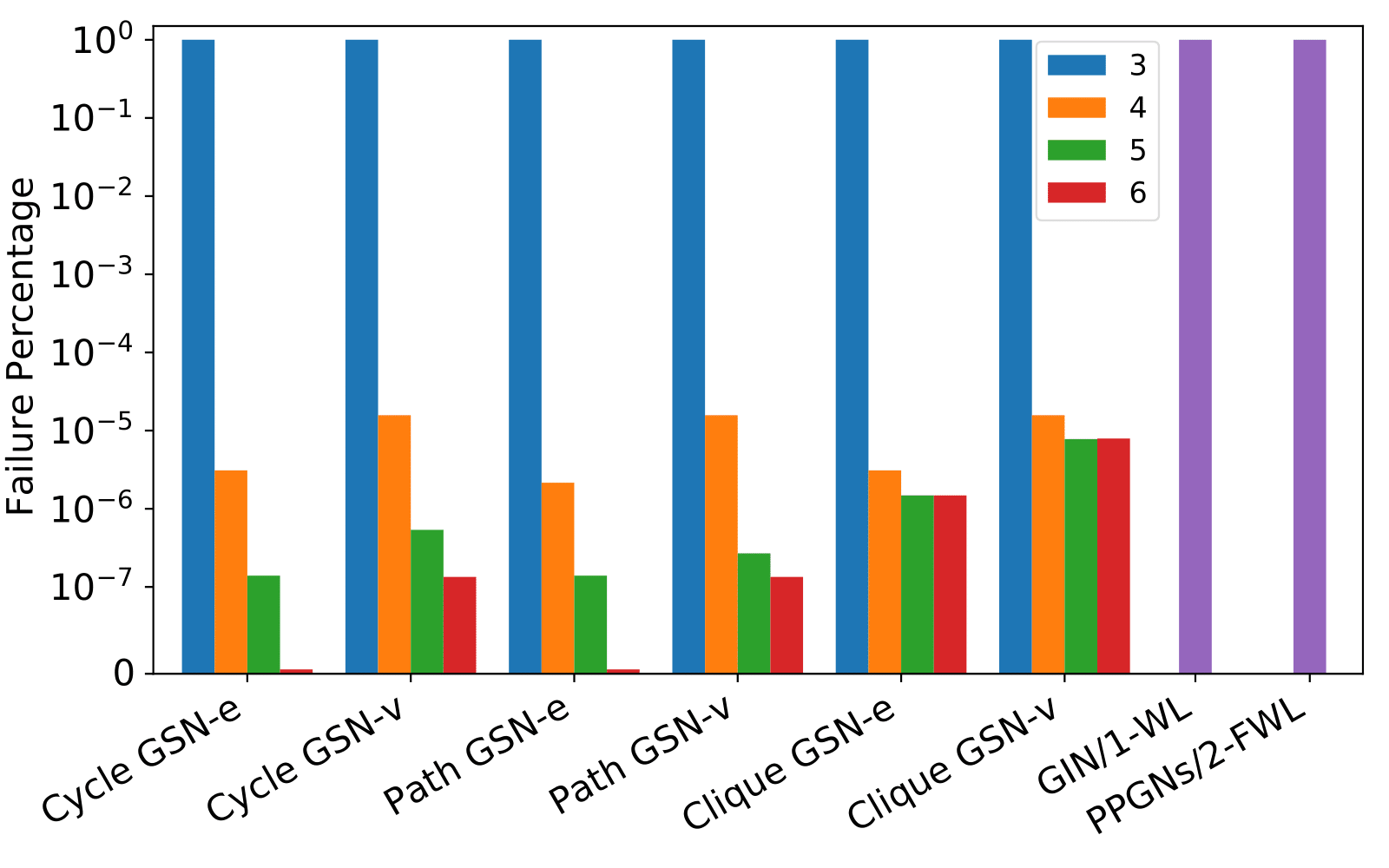}
        \caption{SR graphs isomorphism test (log scale, smaller values are better). Different colours indicate different substructure sizes.}
        \label{fig:sr_plot} 
\end{figure}

In the following section we evaluate \modelname\ in comparison to the state-of-the-art in a variety of datasets from different domains.  We are interested in practical scenarios where the collection of subgraphs, as well as their size, are kept small. Depending on the dataset domain we experimented with typical substructure families (\textit{cycles, paths} and \textit{cliques}) and maximum substructure size $k$ (note that for each setting, our substructure collection consists of all the substructures of the family with size $\leq k$). We also experimented with both graphlets and motifs and observed similar performance in most cases. To showcase that structural features can be used as an off-the-shelf strategy to boost GNN performance, we usually choose a base message passing architecture and minimally modify it into a GSN. Unless otherwise stated, the base architecture is a general-purpose MPNN with MLPs used in the message and update functions.  Additional implementation details can be found in the Appendix C.

\subsection{Synthetic Graph Isomorphism test}\label{sec:synthetic}
We tested the ability of GSNs to decide if two graphs are non-isomorphic on a collection of Strongly Regular graphs of size up to 35 nodes,  attempting to disambiguate pairs with the same number of nodes (for different sizes the problem becomes trivial). As we are only interested in the bias of the architecture itself, we use GSN with random weights to compute graph representations. Two graphs are deemed isomorphic if the Euclidean distance of their representations is smaller than a predefined threshold $\epsilon$. Figure \ref{fig:sr_plot} shows the failure percentage of our isomorphism test when using different graphlet substructures (\textit{cycles, paths}, and \textit{cliques}) of varying size $k$. Interestingly, the number of failure cases of GSN decreases rapidly as we increase $k$; cycles and paths of maximum length $k=6$ are enough to tell apart all the graphs in the dataset. Note that the performance of cliques saturates, possibly because the largest clique in our dataset has 5 nodes. Observe also the discrepancy between \modelname-v and \modelname-e. In particular, vertex-wise counts do not manage to distinguish all graphs, although missing only a few instances, which is in accordance with Theorem \ref{thrm: GSN-v vs GSN-e}. Finally, 1-WL \cite{xu2018how} and 2-FWL \cite{maron2019provably} equivalent models demonstrate 100\% failure, as expected from theory.

\subsection{TUD Graph Classification Benchmarks}\label{TUD}
We evaluate GSN on datasets from the classical TUD benchmarks. We use seven datasets from the  domains of bioinformatics and computational social science and compare against various GNNs and Graph Kernels. The base architecture that we used is GIN \cite{xu2018how}.
We follow the same evaluation protocol of \cite{xu2018how}, performing 10-fold cross-validation and then reporting the performance at the epoch with the best average accuracy across the 10 folds. Table \ref{tab:tud_datasets} lists all the methods evaluated with the split of \cite{zhang2018end}. We select our model by tuning architecture and optimisation hyperparameters and substructure related parameters, that is: (i) $k$, (ii) motifs against graphlets. Following domain evidence we choose the following substructure families: \textit{cycles} for molecules, \textit{cliques} for social networks. Best performing substructures both for GSN-e and GSN-v are reported. As can be seen, our model obtains state-of-the-art performance in most of the datasets, with a considerable margin against the main GNN baselines in some cases.

\begin{table*}[h]
    \caption{Graph classification accuracy on
    TUD Dataset.  {\bf \color{red} First}, {\bf \color{violet} Second}, {\bf \color{black} Third} best methods are highlighted. For GSN, the best performing structure is shown.
    $^*$Graph Kernel methods.  }
    \label{tab:tud_datasets}
    \centering
    \resizebox{\textwidth}{!}{
    \begin{tabular}{l cccc ccc}
        \toprule
        Dataset & MUTAG & PTC & Proteins & NCI1 & Collab & IMDB-B & IMDB-M \\
        \midrule        
        RWK* \cite{DBLP:conf/colt/GartnerFW03} & 
         79.2$\pm$2.1 & 
         55.9$\pm$0.3 & 
         59.6$\pm$0.1 & 
         $>$3 days & N/A & N/A &
         N/A\\
        
        GK* (k=3) \cite{shervashidze2009efficient} & 81.4$\pm$1.7& 
        55.7$\pm$0.5 &
        71.4$\pm$0.31 & 
        62.5$\pm$0.3 & 
        N/A & 
        N/A &
        N/A\\

        PK* \cite{DBLP:journals/ml/NeumannGBK16} & 
         76.0$\pm$2.7& 
         59.5$\pm$2.4 &
         73.7$\pm$0.7 & 
         82.5$\pm$0.5 & 
         N/A & 
         N/A & 
         N/A\\

          WL kernel* \cite{shervashidze2011weisfeiler} & 90.4$\pm$5.7 & 59.9$\pm$4.3 & 
         75.0$\pm$3.1 & \textcolor{red}{\textbf{86.0}$\pm$\textbf{1.8}} & 
         78.9$\pm$1.9 &
         73.8$\pm$3.9 &
         50.9$\pm$3.8 \\
         
        GNTK* \cite{DBLP:conf/nips/DuHSPWX19} & 
        90.0$\pm$8.5 & \textbf{\textcolor{violet}{67.9$\pm$6.9}} & 75.6$\pm$4.2 & \textbf{\textcolor{violet}{84.2$\pm$1.5}} & \textbf{\textcolor{violet}{83.6$\pm$1.0}} & \textbf{\textcolor{violet}{76.9$\pm$3.6}} & \textbf{\textcolor{violet}{52.8$\pm$4.6}}\\ 
         
        DCNN \cite{DBLP:conf/nips/AtwoodT16}& 
          N/A&  N/A &
          61.3$\pm$1.6
          & 56.6$\pm$1.0 &
          52.1$\pm$0.7 &
          49.1$\pm$1.4 &
          33.5$\pm$1.4\\

         DGCNN \cite{zhang2018end} & 85.8$\pm$1.8 & 
        58.6$\pm$2.5 & 
        75.5$\pm$0.9 & 
        74.4$\pm$0.5 & 
        73.8$\pm$0.5 &
        70.0$\pm$0.9 & 
        47.8$\pm$0.9\\

        IGN \cite{DBLP:conf/iclr/MaronBSL19} & 83.9$\pm$13.0 &
        58.5$\pm$6.9 &
        {\color{black} \textbf{76.6$\pm$5.5}} &
        74.3$\pm$2.7 & 
        78.3$\pm$2.5 & 
        72.0$\pm$5.5 & 
        48.7$\pm$3.4\\
        
        GIN \cite{xu2018how} & 
        89.4$\pm$5.6 & 
        64.6$\pm$7.0	& 
        76.2$\pm$2.8 &
        82.7$\pm$1.7 &
        80.2$\pm$1.9 & 
        75.1$\pm$5.1 &
        52.3$\pm$2.8\\

        PPGNs \cite{maron2019provably} &
        {\color{black} \textbf{90.6$\pm$8.7}} &
        66.2$\pm$6.6 &
        \textbf{\textcolor{violet}{77.2$\pm$4.7}} & 
        83.2$\pm$1.1 & 
        81.4$\pm$1.4 &
        73.0$\pm$5.8 & 
        50.5$\pm$3.6\\

        Natural GN \cite{de2020natural} &
        89.4$\pm$1.60 &
        66.8$\pm$1.79 &
        71.7$\pm$1.04 &
        82.7$\pm$1.35 &
        N/A &
        74.8$\pm$2.01 &
        51.3$\pm$1.50 \\

        WEGL \cite{kolouri2021wasserstein} &
        N/A &
        {\color{black} \textbf{67.5±7.7}} &
         76.5±4.2 &
         N/A &
        80.6±2.0 &
        75.4±5.0 &
        52.3±2.9\\

        GIN+GraphNorm \cite{cai2020graphnorm} &
        \textcolor{violet}{\textbf{91.6 ± 6.5}} &
        64.9 ± 7.5 &
       \textcolor{red}{\textbf{77.4 ± 4.9}} &
        82.7 ± 1.7&
         80.2 ± 1.0 &
        76.0 ± 3.7 &
        N/A\\

        \midrule
        {\bf GSN-e} & {\color{black}\textbf{90.6$\pm$7.5}}  &
        \textbf{\textcolor{red}{68.2$\pm$7.2}}& 
        {\color{black} \textbf{76.6$\pm$5.0}} & 
       {\color{black} \textbf{83.5$\pm$ 2.3}}&
        \textbf{\textcolor{red}{85.5$\pm$1.2}} &
        \textbf{\textcolor{red}{77.8$\pm$3.3}} & 
        \textbf{\textcolor{red}{54.3$\pm$3.3}}\\
         &  6 (cycles) 
         & 6 (cycles)  
         & 4 (cliques)
         & 15 (cycles) 
         & 3 (triangles)
         & 5 (cliques)  
         & 5 (cliques)   \\
        \midrule
        {\bf GSN-v}  &
        \textbf{\textcolor{red}{92.2$\pm$7.5}} & 
        67.4$\pm$5.7 & 
        74.59$\pm$5.0 &
        {\color{black}\textbf{ 83.5$\pm$2.0}} &
        {\color{black}\textbf{82.7$\pm$1.5}}  &
        {\color{black}\textbf{{76.8$\pm$2.0}}}& 
        {\color{black}\textbf{52.6$\pm$3.6}}\\
            &  12 (cycles) 
            &  10 (cycles) 
            &  4 (cliques)
            & 3 (triangles) 
            & 3 (triangles)
            &   4 (cliques) 
            &   3 (triangles) \\
        \bottomrule
    \end{tabular}
    }
\end{table*}

\subsection{ZINC Molecular graphs} We evaluate GSN on the task of regressing the ``penalized water-octanol
partition coefficient - logP'' (see \cite{gomez2018automatic, DBLP:conf/icml/KusnerPH17, DBLP:conf/icml/JinBJ18} for details) of molecules from the ZINC database \cite{DBLP:journals/jcisd/IrwinSMBC12, dwivedi2020benchmarking}. 
We use structural features obtained with $k$-cycle counting and report the result of the best performing substructure w.r.t. the validation set. 

As dictated by the evaluation protocol of \cite{dwivedi2020benchmarking}, the total number of parameters of the model is approximately 100K, which is achieved by selecting an appropriate network width.\footnote{A larger version of GSN using ~500K parameters attains \textbf{0.101 ± 0.010} test MAE.} The data split is obtained from \cite{dwivedi2020benchmarking} and the evaluation metric is the Mean Absolute Error (MAE). We compare against a variety of baselines, ranging from traditional message passing NNs to recent more expressive architectures \cite{corso2020principal, beaini2020directional, vignac2020building, balcilar2021breaking} and a molecular-specific one which is based on the junction tree molecular decomposition \cite{fey2020hierarchical}. Wherever possible, we compare two variants, one that does not take edge features into account and one that does. In both cases, GSN achieves state-of-the-art results outperforming all the baseline architectures.

\subsection{OGB-MOLHIV} 
We use \texttt{ogbg-molhiv} from the Open Graph Benchmark - OGB - \cite{DBLP:ogb} as a 
graph-level binary classification task, where the aim is to predict if a molecule inhibits HIV replication or not. We obtain baseline results from a plethora of different methods\footnote{the most representative ones from the OGB public leaderboard: \url{https://ogb.stanford.edu/docs/leader_graphprop/##ogbg-molhiv}}, ranging from algorithms specific to molecular graphs to general purpose architectures. Following the same rationale as in the previous experiments, we choose a base architecture and modify it into a GSN variant by introducing structural features in the aggregation function (cycle counts, similar to other molecular datasets). Here we use the following two base architectures: (a) \textit{GIN-VN}, a variation of GIN that allows for edge features and is extended with a \textit{virtual node}, i.e. a node connected to every node in the graph. 
(b) Directional Graph Networks (DGN), a GNN that propagates messages in an anisotropic manner, based on a predefined graph vector field. Observe that the vector field is an alternative way to break local symmetries. The authors of DGN use vector fields defined by the eigenvectors of the graph, while in our case the vector field is defined by graph substructures. More information can be found in the supplementary material. 

Using the evaluator provided by the authors, we report the ROC-AUC metric at the epoch with the best validation performance (substructures are also chosen based on the validation set).  By examining the results in Table~\ref{tab:molhiv-large} the following observations can be made,  (a) general purpose GNNs benefit from symmetry breaking mechanisms, either in the form of eigenvectors or in the form of substructures. (b) Cyclical substructures are a good inductive bias when learning on molecules (e.g. P-WL is a graph kernel based on topological features that contain information of graph cycles, similar to GSN). (c) Further evidence for that is provided by observing the performance of molecular fingerprints methods. In specific, a method based on the extended-connectivity fingerprints \cite{durant2002reoptimization}, which mainly focuses on the structure of the molecule reports 0.8060 ± 0.0010 test performance, while one that additionally uses the MACCS fingerprints \cite{rogers2010extended}, which mainly encode the presence of certain functional groups (i.e. both structure and attributes), reports 0.8232 ± 0.0047 test performance. These methods although not directly comparable to ours, 
currently achieve the best results in this dataset, thus this further motivates research on making GNNs structure-aware.

\begin{table}[t]
\begin{minipage}{0.39\textwidth}
    \centering
        \caption{MAE in \textit{ZINC}}
        \resizebox{1\textwidth}{!}{
          \begin{tabular}{l c c}
            \toprule
            Method & MAE & MAE (EF)\\
            \midrule
            GCN  \cite{DBLP:conf/iclr/KipfW17}& 0.469±0.002 & -\\
            GIN \cite{xu2018how} & 0.408±0.008 & -  \\
            GraphSage\cite{graphsage_neurips17} & 0.410±0.005 & - \\
            GAT \cite{DBLP:conf/iclr/VelickovicCCRLB18} 
            & 0.463±0.002 & -\\
            MoNet\cite{monti2017geometric} & 0.407±0.007 & - \\
            GatedGCN \cite{bresson2017residual} &
            0.422±0.006 & 0.363±0.009 \\
            
            MPNN & 0.254±0.014 & 0.209±0.018\\   
            
            MPNN-r & 0.322±0.026 & 0.279±0.023\\         
                        
            PNA\cite{corso2020principal} & 0.320±0.032 & 0.188±0.004\\

            DGN\cite{beaini2020directional} & \textbf{0.219±0.010} & 0.168±0.003\\
            
            GNNML\cite{balcilar2021breaking} & {\color{violet}\textbf{0.161±0.006}} & - \\

            HIMP\cite{fey2020hierarchical} & - & \textbf{0.151±0.006}\\
            
            SMP\cite{vignac2020building} & \textbf{0.219±} &{\color{violet}\textbf{0.138±}}\\

            \midrule

            \textbf{GSN} &{\color{red} \textbf{ 0.140$\pm$0.006}} & {\color{red} \textbf{0.115±0.012}}\\
            \bottomrule
          \end{tabular}%
          }
                  \label{tab:zinc_dataset}
        \end{minipage}
\hfill
\begin{minipage}{0.58\textwidth}
    \centering
    \caption{Test and Validation ROC-AUC in OGB-MOLHIV.}
    \resizebox{\textwidth}{!}{%
    \begin{tabular}{@{}lcc@{}}
    \toprule
    Method             & \vtop{\hbox{\strut Test}\hbox{\strut ROC-AUC }}                                    & \vtop{\hbox{\strut Validation}\hbox{\strut ROC-AUC }}                             \\ \midrule
    GIN+VN\cite{xu2018how}             & 0.7707 ± 0.0149                                 & \textbf{0.8479 ± 0.0068}                        \\
    DeeperGCN\cite{li2018deeper}          & 0.7858 ± 0.0117             & 0.8427 ± 0.0063                                 \\
    HIMP\cite{fey2020hierarchical}              & 0.7880 ± 0.0082                                 & -                                               \\
    GCN+GraphNorm\cite{cai2020graphnorm}      & 0.7883 ± 0.0100                                 & 0.7904 ± 0.0115                                 \\
    PNA\cite{corso2020principal}                & 0.7905 ± 0.0132                                 & {\color{violet} \textbf{0.8519 ± 0.0099}} \\
    PHC-GNN\cite{le2021parameterized}            & 0.7934 ± 0.0116                                 & 0.8217 ± 0.0089                                 \\
    DeeperGCN+FLAG\cite{kong2020flag}     &\textbf{0.7942 ± 0.0120}                                 & 0.8425 ± 0.0061                                 \\
    DGN + eigenvectors \cite{beaini2020directional} & {\color{violet} \textbf{0.7970 ± 0.0097}}                        & 0.8470 ± 0.0047                                 \\
    P-WL \cite{rieck2019persistent}              & {\color[HTML]{FE0000} \textbf{0.8039 ± 0.0040}} & 0.8279 ± 0.0059                                 \\ \midrule
    \textbf{GSN} (GIN+VN base)         & 0.7799±0.0100                                  & {\color[HTML]{FE0000} \textbf{0.8658±0.0084}} \\
    \textbf{GSN} (DGN + substructures) & {\color[HTML]{FE0000} \textbf{0.8039 ± 0.0090}} & 0.8473 ± 0.0096          
    \\ \bottomrule
    \end{tabular}%
    }
        \label{tab:molhiv-large}
    \end{minipage}
\end{table}

\subsection{Ablation Studies}

\subsubsection{Comparison between substructure collections}

In Figure \ref{fig:ablation} (left), we compare the training and test error for different substructure families (cycles, paths and non-isomorphic trees -- for each experiment we use all the substructures of size $\leq k$ in the family). Additionally, we measure the ``uniqueness'' of the identifiers each substructure yields as follows: for each graph $G$ in the dataset, we measure the number of unique vertex features $u_G$ (input vertex features concatenated with vertex structural identifiers for GSN-v). Then, we sum them up over the entire training set and divide by the total number of nodes, yielding the disambiguation score $\delta = \frac{\sum_{G} u_G}{\sum_{G} |V_G|}$. The disambiguation scores for the different types of substructures are illustrated as  horizontal bars in Figure \ref{fig:ablation} (the exact values can be found in Appendix C.2, Table 2). %

A first thing to notice is that the training error is tightly related to the disambiguation score. As identifiers become more discriminative, the model gains expressive power. On the other hand, the test error is not guaranteed to decrease when the identifiers become more discriminative. For example, although cycles have smaller disambiguation scores, they manage to generalise much better than the other substructures, the performance of which is similar to the baseline architecture (MPNN with MLPs). This is also observed when comparing against \cite{sato2020random} (\textit{MPNN-r} method in Table~\ref{tab:zinc_dataset}), where, akin to unique identifiers, random features are used to strengthen the expressivity of GNN architectures. This approach also fails to improve the baseline architecture in terms of the performance in the test set. This validates our intuition that unique identifiers can be hard to generalise when chosen in a non-permutation equivariant way and motivates once more the importance of choosing the identifiers not only based on their discriminative power, but also in a way that allows incorporating the appropriate inductive biases. Finally, we observe a substantial jump in performance when using GSN with cycles of size $k\geq 6$. This is not surprising, as cyclical patterns of such sizes (e.g. aromatic rings) are very common in organic molecules.

\begin{figure}[t]
\centering
\begin{subfigure}{0.49\textwidth}
  \centering
  \includegraphics[width=\textwidth]{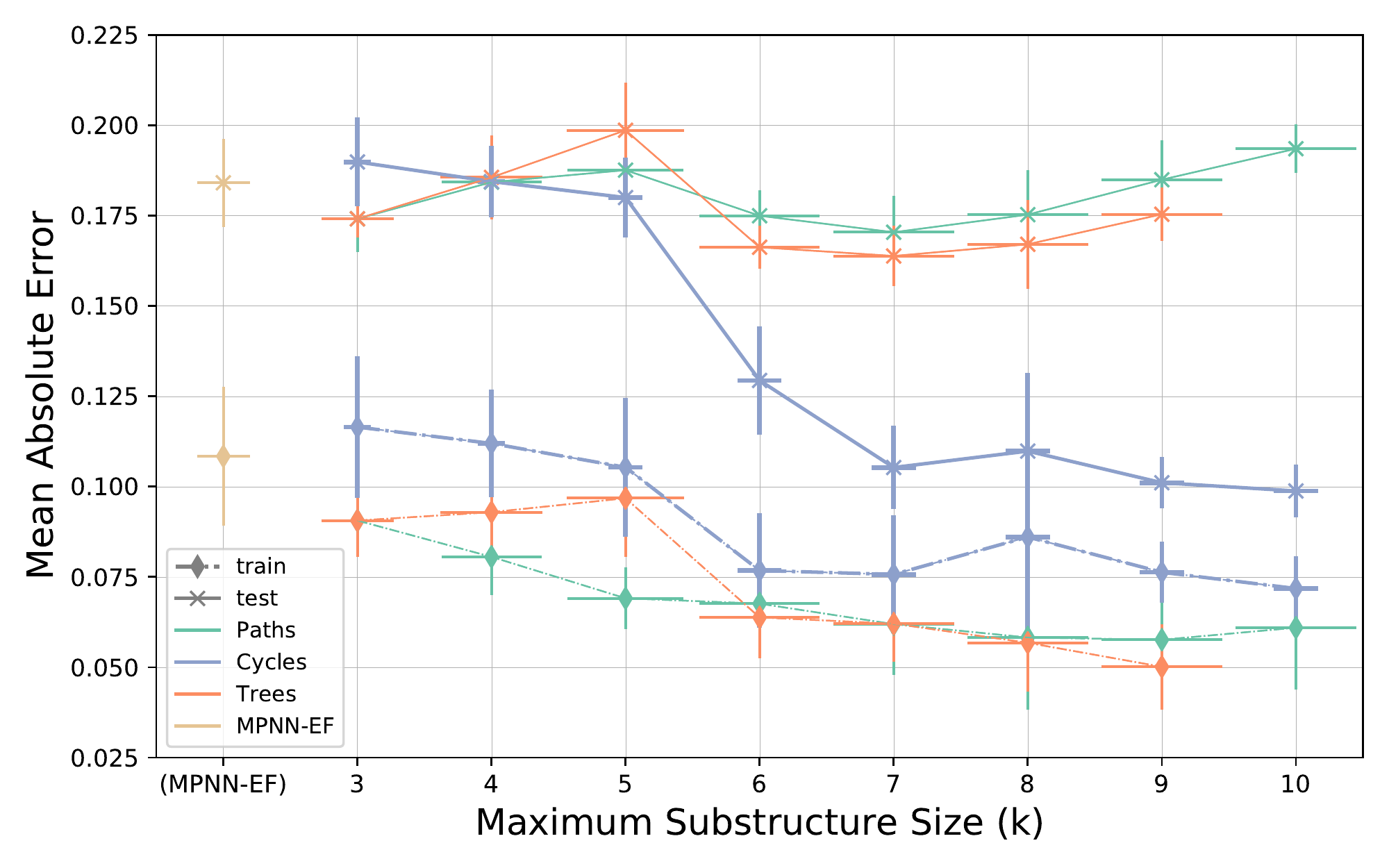}
\end{subfigure}
\begin{subfigure}{0.49\textwidth}
  \centering
     \includegraphics[width=\textwidth]{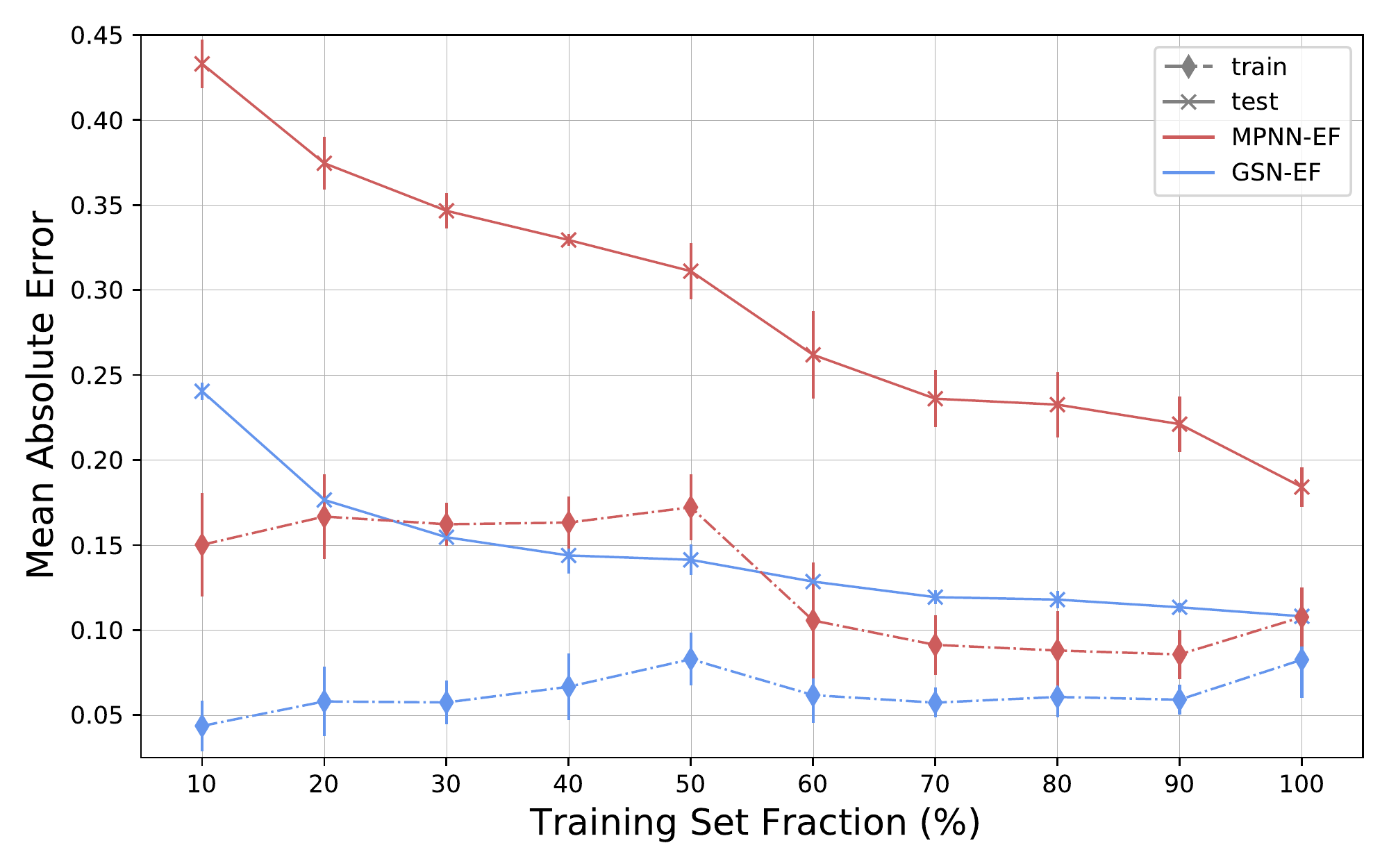}
\end{subfigure}
  \caption{(Left) Train (dashed) and test (solid) MAEs for Path-, Tree- and Cycle-GSN\textit{-EF} as a function of the maximum substructure size $k$. Vertical bars indicate standard deviation; horizontal bars depict disambiguation scores $\delta$. (Right) Train (dashed) and test (solid) MAEs for GSN\textit{-EF}(blue) and MPNN\textit{-EF}(red) as a function of the dataset fraction used for training}
  \label{fig:ablation}
\end{figure}

\subsubsection{Generalisation}
We repeat the experimental evaluation on ZINC using different fractions of the training set and compare the vanilla MPNN model against GSN. In Figure \ref{fig:ablation} (right), we plot the training and test errors of both methods. Regarding the training error, GSN consistently performs better, following our theoretical analysis on its expressive power. More importantly, GSN manages to generalise much better even with a small fraction of the training dataset. Observe that GSN requires only 20\% of the samples to achieve approximately the same test error that MPNN achieves when trained on the entire training set.

\subsubsection{Structural Features \& Message Passing:} We perform an ablation study on the abilities of the structural features to predict the task at  hand, when given as input to a graph-agnostic network. In particular, we compare our best performing GSN with a DeepSets model \cite{zaheer2017deep} that treats the input features and the structural identifiers as a set. For fairness of evaluation the same hyperparameter search is performed for both models (see Appendix C.5. Interestingly, as we show in Table \ref{tab:deepset_vs_gsn}, our baseline attains particularly strong performance across a variety of datasets and often outperforms other traditional message passing baselines. This demonstrates the benefits of these additional features and motivates their introduction in GNNs, which are unable to compute them. As expected, we observe that applying message passing on top of these features, brings performance improvements in the vast majority of the cases, sometimes considerably, as in the ZINC dataset.

\begin{table}[h]
\caption{Comparison between DeepSets and GSN with the same structural features}
\centering
\label{tab:deepset_vs_gsn}
\begin{tabular}{@{}lllll@{}}
\toprule
Dataset     & DeepSets              & \# params & GSN                       & \# params \\ \midrule
MUTAG       & \textbf{93.3$\pm$6.9} & 3K        & 92.8$\pm$7.0              & 3K        \\
PTC         & 66.4$\pm$6.7          & 2K        & \textbf{68.2$\pm$7.2}     & 3K        \\
Proteins    & \textbf{77.8$\pm$4.2} & 3K        & \textbf{77.8$\pm$5.6}     & 3K        \\
NCI1        & 80.3 $\pm$2.4         & 10K       & \textbf{83.5$\pm$ 2.0}    & 10K       \\
Collab      & 80.9 $\pm$1.6         & 30K       & \textbf{85.5$\pm$1.2}     & 52K       \\
IMDB-B      & 77.1 $\pm$3.7         & 51K       & \textbf{77.8$\pm$3.3}     & 65K       \\
IMDB-M      & 53.3 $\pm3.2$         & 68K       & \textbf{54.3$\pm$3.3}     & 66K       \\
ZINC        & 0.288 $\pm$0.003      & 366K      & \textbf{0.108 $\pm$0.018} & 385K      \\
ogbg-molhiv & 77.34$\pm$1.46        & 3.4M      & \textbf{77.99$\pm$1.00}   & 3.3M      \\ \bottomrule
\end{tabular}
\end{table}

\section{Conclusion}

In this paper, we propose a novel way to design structure-aware graph neural networks. Motivated by the limitations of traditional GNNs to capture important topological properties of the graph, we formulate a message passing scheme enhanced with structural features that are extracted by subgraph isomorphism.
We show both theoretically and empirically that our construction leads to improved expressive power and attains state-of-the-art performance in real-world scenarios. In future work, we will further explore the expressivity of GSNs as an alternative to the $k$-WL tests, as well as their generalisation capabilities. Another important direction is to infer prominent substructures directly from the data and explore the ability of graph neural networks to compose substructures.

\section{Acknowledgements}
\label{sec:acks}
This research was partially supported by the ERC Consolidator Grant No. 724228 - LEMAN (GB and MB). The work of GB is partially funded by a PhD scholarship from the Department of Computing, Imperial College London. 
SZ acknowledges support from the EPSRC Fellowship DEFORM: Large Scale Shape Analysis of Deformable Models of Humans (EP/S010203/1) and a Google Faculty award. MB acknowledges support from Google Faculty awards and the Royal Society Wolfson Research Merit award.

{\small
\bibliographystyle{unsrt}
\bibliography{egbib}
}

\newpage
\appendix
\section{Deferred Proofs} 

\subsection{GSN is permutation equivariant}\label{proof:perm_equiv}

\begin{proof}
Let $\mathbf{A} \in \mathbb{R}^{n\times n}$ the adjacency matrix of the graph, $\mathbf{H}_0 \in \mathbb{R}^{n\times d^V_{in}}$ the input vertex features, $\mathbf{E} \in \mathbb{R}^{n\times n \times d^E_{in}}$ the input edge features and $\mathbf{E}_i \in \mathbb{R}^{n\times n}$ the edge features at dimension $i$. Let $S^V(\mathbf{A}) \in \mathbb{N}^{n\times d_V}, S^E(\mathbf{A}) \in \mathbb{N}^{n\times n \times d_E}$ the functions generating the vertex and edge structural identifiers respectively. 

It is obvious that subgraph isomorphism is invariant to the ordering of the vertices, i.e. we will always obtain the same matching between subgraphs $G_S$ and graphs $H$ in the subgraph collection. Thus, each vertex $v$ (edge $(v,u)$) in the graph will be assigned the same structural identifiers $\mathbf{x}^V_v$ ($\mathbf{x}^E_{v,u}$) regarless of the vertex ordering, and $S^V$ and $S^E$ are permutation equivariant, i.e., for any permutation matrix $\mathbf{P} \in \{0,1\}^{n\times n}$ it holds that 
\begin{equation*}
    \begin{split}
S^V(\mathbf{P}\mathbf{A}\mathbf{P}^T) &= \mathbf{P}S^V(\mathbf{A})\\
S^E(\mathbf{P}\mathbf{A}\mathbf{P}^T) &= \mathbf{P}[S^E_1(\mathbf{A});\dots ;S^E_{d_E}(\mathbf{A})]\mathbf{P}^T,
    \end{split}
\end{equation*}
where the permutation is applied at each slice $S^E_i(\mathbf{A}) \in \mathbb{N}^{n\times n}$ of the tensor $S^E(\mathbf{A})$.

Let $f(\mathbf{A}, \mathbf{H}, \mathbf{E}) \in \mathbb{R}^{n\times d_{out}}$ a GSN layer. We will show that $f$ is permutation equivariant.  We need to show that if $\mathbf{Y} = f(\mathbf{A}, \mathbf{H}, \mathbf{E})$ the output of the GSN layer, then $\mathbf{PY} = f(\mathbf{PAP}^T, \mathbf{PH}, \mathbf{P}[\mathbf{E}_1;  \dots ;\mathbf{E}_{d_E}] \mathbf{P}^T)$ for any permutation matrix $\mathbf{P}$. It is easy to see that GSN-v can can be expressed as a traditional MPNN $g(\mathbf{A}, \mathbf{H}, \mathbf{E}) \in \mathbb{R}^{n\times d_{out}}$ (similar to Eq. (5) of the main paper) by replacing the vertex features $\mathbf{H}$ with the concatenation of the input vertex features and the vertex structural identifiers, i.e. $\mathbf{Y} = f(\mathbf{A}, \mathbf{H}, \mathbf{E}) = g(\mathbf{A}, [\mathbf{H}; S^V(\mathbf{A})], \mathbf{E})$. Thus, 

\begin{align*}
    &f(\mathbf{PAP}^T, \mathbf{P}\mathbf{H}, \mathbf{P}[\mathbf{E}_1;\dots;\mathbf{E}_{d_E} ] \mathbf{P}^T, ) =
    \\&= g(\mathbf{PAP}^T, [\mathbf{PH}; S^V(\mathbf{P}\mathbf{A}\mathbf{P}^T)], \mathbf{P}[\mathbf{E}_1 ; \dots; \mathbf{E}_{d_E}] \mathbf{P}^T)\\
    & = g(\mathbf{PAP}^T, \mathbf{P}[\mathbf{H}; S^V(\mathbf{A})], \mathbf{P}[\mathbf{E}_1 ; \dots; \mathbf{E}_{d_E}] \mathbf{P}^T) \\
    &= \mathbf{P} g(\mathbf{A}, [\mathbf{H}; S^V(\mathbf{A})], \mathbf{E})\\
    & = \mathbf{PY}\\
\end{align*}
where in the last step we used the permutation equivariant property of MPNNs. Similarly, we can show that a GSN-e layer is permutation equivariant, since it can be expressed as a traditional MPNN layer by replacing the edge features with the concatenation of the original edge features and the edge structural identifiers $f(\mathbf{A}, \mathbf{H}, \mathbf{E}) = g(\mathbf{A}, \mathbf{H}, [\mathbf{E};S^E(A)])$.

Overall, a GSN network is permutation equivariant as composition of permutation equivariant functions, or permutation invariant when composed with a permutation invariant layer at the end, i.e. the READOUT function.

\end{proof}

\subsection{Proof of Theorem 3.1: {\modelname\ is at least as powerful as the 1-WL test}}\label{sec:proof_1}

\begin{proof}

To show that GSN it is at least as expressive as the 1-WL test, we will repurpose the proof of Theorem 3 in \cite{xu2018how}  and demand the injectivity of the update function (w.r.t. both the hidden state $\mathbf{h}^t_v$ and the message $\mathbf{m}^{t+1}_v$), and the injectivity of the aggregation w.r.t. the multiset of the hidden states of the neighbours $\Lbag \mathbf{h}^{t}_u\Rbag_{u\in\mathcal{N}(v)}$. It suffices then to show that if injectivity is preserved then \modelname s are at least as powerful as the 1-WL.

We will show the above statement for vertex-labelled graphs, since traditionally the 1-WL test does not take into account edge labels.\footnote{if one considers a simple 1-WL extension that concatenates edge labels to neighbour colours, then the same proof applies.} We can rephrase the statement as follows: \textit{If GSN deems two graphs $G_1$, $G_2$ as isomorphic, then also 1-WL deems them isomorphic}. Given that the graph-level representation is extracted by a readout function that receives the multiset of the vertex colours in its input (i.e. the graph-level representation is the vertex colour histogram at some iteration $t$), then it suffices to show that if for the two graphs the multiset of the vertex colours that GSN infers is the same, then also 1-WL will infer the same multiset for the two graphs. 

Consider the case where the two multisets that GSN extracts are the same:  i.e. ${\Lbag \mathbf{h}^{t}_v\Rbag_{v\in\mathcal{V}_{G_1}} = \Lbag \mathbf{h}^{t}_u\Rbag_{u\in\mathcal{V}_{G_2}}}$. Then both multisets contain the same distinct colour/hidden representations with the exact same multiplicity. Thus, it further suffices to show that
if two vertices $v, u$ (that may belong to the same or to different graphs) have the same GSN hidden representations $\mathbf{h}^t_v = \mathbf{h}^t_u$ at any iteration $t$, then they will also have the same colours $c^t_v = c^t_u$, extracted by 1-WL. Intuitively, this means that GSN creates a partition of the vertices of each graph that is at least as fine-grained as the one created by 1-WL. We prove by induction (similarly to \cite{xu2018how}) that GSN model class contains a model where this holds (w.l.o.g. we show that for GSN-v; same proof applies to GSN-e).

For $t=0$ the statement holds since the initial vertex features are the same for both GSN and 1-WL, i.e. $\mathbf{h}^0_v = c^0_v, \quad \forall v \in \mathcal{V}_{G_1}\cup\mathcal{V}_{G_2}$. Suppose the statement holds for $t-1$, i.e. ${\mathbf{h}^{t-1}_v = \mathbf{h}^{t-1}_u \Rightarrow c^{t-1}_v = c^{t-1}_u}$. Then we show that it also holds for $t$. Every vertex hidden representation at step $t$ is updated as follows: ${\mathbf{h}^{t}_v  = \mathrm{UP}^{t}\big(\mathbf{h}^{t-1}_v, \mathbf{m}^{t}_v\big)}$. Assuming that the update function $\mathrm{UP}^{t}$ is injective, we have the following: if $\mathbf{h}^{t}_v =  \mathbf{h}^{t}_u$, then:
\begin{itemize}[wide, labelwidth=!, labelindent=0pt]
\item $\mathbf{h}^{t-1}_v = \mathbf{h}^{t-1}_u$, which from the induction hypothesis implies that  $c^{t-1}_v = c^{t-1}_u.$
\item $\mathbf{m}^{t}_v = \mathbf{m}^{t}_u$, where the message function is defined as in Eq. (5) of the main paper.
Additionally here we require $M^{t}$ to be injective w.r.t. the multiset of the hidden representations of the neighbours:\footnote{Lemma 5 from \cite{xu2018how} states that such a function always exists assuming that the elements of the multiset originate from a countable domain}

\begin{align*}
    &\mathbf{m}^{t}_v = \mathbf{m}^{t}_u \Rightarrow 
    \Lbag \mathbf{h}^{t-1}_w \Rbag_{w\in\mathcal{N}_v} = \Lbag \mathbf{h}^{t-1}_z\Rbag_{z\in\mathcal{N}_u}
\end{align*}

From the induction hypothesis we know that  ${\mathbf{h}^{t-1}_w = \mathbf{h}^{t-1}_z}$ implies that ${c^{t-1}_w = c^{t-1}_z}$ for any  ${w\in\mathcal{N}_v, z\in\mathcal{N}_u}$, thus ${\Lbag c^{t-1}_w\Rbag_{w\in\mathcal{N}_v} = \Lbag c^{t-1}_z\Rbag_{z\in\mathcal{N}_u}}$.
\end{itemize}
Concluding, given the update rule of 1-WL: ${c^{t}_v = \mathrm{HASH}\Big(c^{t-1}_v, \  \Lbag c^{t-1}_w \Rbag_{w\in\mathcal{N}_v}\Big)}$, it holds that $c^{t}_v = c^{t}_u.$

\end{proof}

\subsection{Proof of Corollary 3.2}\label{sec:proof_universality}
\begin{proof}
In order to prove the universality of GSN, we will show that when the substructure collection contains all graphs of size $n-1$, then there exists a parametrisation of GSN that can infer the isomorphism classes of all vertex-deleted subgraphs of the graph $G$ (the \textit{deck} of $G$). 

The reconstruction conjectures states that two graphs with at least three vertices are isomorphic if and only if they have the same deck. Thus, the deck is sufficient to distinguish all non-isomorphic graphs. The deck can be defined as follows:

Let $\mathcal{H} = \{H_1, H_2, \cdots, H_K\}$ the set of all possible graphs of size $n-1$. The vertex-deleted subgraphs of $G$ are by definition all the induced subgraphs of $G$ with size $n-1$, which we denote as
\begin{equation*}
\mathcal{G}_{n-1} = \{G_S \text{: induced subgraph of } G \text{ with }|\mathcal{V}_{G_S}| = n-1\}.
\end{equation*}
Then, the deck $D(G)$ can be defined as a vector of size $|\mathcal{H}|$, where at the $j$-th dimension
\begin{equation*}
\begin{split}
   D_j(G) &= \bigg|\Big\{G_S \in \mathcal{G}_{n-1}\ | \ G_S\simeq H_j \Big\}\bigg|\ \\
   &= \sum_{G_S \in \mathcal{G}_{n-1}}\mathds{1}[G_S\simeq H_j],
\end{split}
\end{equation*}
where $\mathds{1}[\cdot]$ the indicator function. 
The structural feature  $x_{H_j,i}^V (v)$ for each substructure $H_j$ and orbit $i$ are computed as follows:
\begin{align*}
    x_{H_j, i}^V (v) &= \bigg| \Big\{G_S \simeq H_j \ | \, v \in \mathcal{V}_{G_S}, \  f(v) \in O_{H_j,i}^V \Big\} \bigg|\\
    &= \sum_{G_S \in \mathcal{G}_{n-1}}\mathds{1}[v \in \mathcal{V}_{G_S}]\mathds{1}[G_S \simeq H_j]\mathds{1}[f_{G_S}(v) \in O_{H_j,i}^V]
\end{align*}
where $f_{G_S}(\cdot) = \emptyset$ if $G_S \not \simeq H_j$, otherwise it is the bijective mapping function. The deck can be inferred as follows:
\begin{align*}
    \sum_{v\in \mathcal{V}_G}\sum_{i=1}^{d_{H_j}} x_{H_j, i}^V (v) 
    & =  \sum_{v\in \mathcal{V}_G}\sum_{i=1}^{d_{H_j}}\sum_{G_S \in \mathcal{G}_{n-1}}\mathds{1}[v \in \mathcal{V}_{G_S}]\mathds{1}[G_S \simeq H_j]\mathds{1}[f_{G_S}(v) \in O_{H_j,i}^V]\\
    & = \sum_{v\in \mathcal{V}_G}\sum_{G_S \in \mathcal{G}_{n-1}}\mathds{1}[v \in \mathcal{V}_{G_S}]\mathds{1}[G_S \simeq H_j]\sum_{i=1}^{d_{H_j}}\mathds{1}[f_{G_S}(v) \in O_{H_j,i}^V]\\
    & = \sum_{G_S \in \mathcal{G}_{n-1}}\mathds{1}[G_S \simeq H_j]\sum_{v\in \mathcal{V}_G}\mathds{1}[v \in \mathcal{V}_{G_S}]\\
    & =\sum_{G_S \in \mathcal{G}_{n-1}} (n-1)\mathds{1}[G_S \simeq H_j]\\
    & = (n-1)D_j(G)\\    
\end{align*}
where we used that $\sum_{i=1}^{d_{H_j}}\mathds{1}[f_{G_S}(v) \in O_{H_j,i}^V]$ = 1, since each vertex can be mapped to a single orbit only. Thus, the deck can be inferred by a simple GSN-v parametrisation (a linear layer with depth equal to $|\mathcal{H}|$ that performs orbit-wise summation and division by the constant $n-1$ for each vertex separately, followed by a sum readout). Since GSN-v can be inferred by GSN-e (Theorem 3.3), then GSN-e is also universal.
\end{proof}

\subsection{Proof of Theorem 3.3}\label{proof_GSNv_GSNe}

\begin{proof}
Without loss of generality we will show Theorem 3.3 for the case of a single substructure $H$. In order to show that GSN-e can express GSN-v, we will first prove the following: \textit{the vertex identifier of a vertex $v$ can be inferred by the edge identifiers of its incident edges}.

To simplify notation we define the following: For an orbit $\mathrm{Orb}^V(v)$, denote the \textit{orbit neighbourhood} as the multiset of the orbits of the neighbours of $v$ and the \textit{orbit degree} as the degree of any vertex in $\mathrm{Orb}^V(v)$  :

\begin{equation*}
\mathcal{N}\big(\mathrm{Orb}^V(v)\big) = \Lbag \mathrm{Orb}^V(u) \ | \ u \in \mathcal{N}(v)\Rbag \ \text{ and } \ \text{deg}(v) = \text{deg}(\mathrm{Orb}^V(v)) = |\mathcal{N}\big(\mathrm{Orb}^V(v)\big)|.
\end{equation*}
For brevity we will use the following notation: vertex orbits are indexed as follows $O^V_{1}, O^V_{2}, \ldots O^V_{d_V} $ and edge orbits $O_{1,1}^E, O_{1,2}^E, \ldots O_{d_V, d_V}^E$ with $O_{i,j}^E = \big( O^V_{i},  O^V_{j}\big)$. Structural features are denoted accordingly: $x_{i}^V(v)$ and $x_{ij}^E(v,u)$

Let's assume that there exists only one matched subgraph $G_S \simeq H$ and the bijection between $\mathcal{V}_{G_S}$ and $\mathcal{H}$ is denoted as $f$. Then, for an abitrary vertex $v$ and a vertex orbit $O^V_i$, one of the following holds:
\begin{itemize}
    \item $v \not \in \mathcal{V}_{G_S}$. Then $x_{i}^V (v) = 0$ and $x_{ij}^E (v,u) = 0, \ \forall u \in \mathcal{N}(v)$,
    \item $v \in \mathcal{V}_{G_S}$ and $\mathrm{Orb}(f(v)) \neq O^V_i$. Then, $x_{i}^V (v) = 0$ and $x_{ij}^E (v,u) = 0, \ \forall u \in \mathcal{N}(v)$. %
    Note that here the directionality of the edge is important, otherwise we cannot determine the value of $x_{ij}^E (v,u)$ unless we also know the orbit of $f(u)$.
    \item $v \in \mathcal{V}_{G_S}$ and $\mathrm{Orb}(f(v)) = O^V_i$. Then, $x_{i}^V (v) = 1$ and since $f(v)$ has exactly $\text{deg}(O^V_i)$ neighbours in $H$, then $v$ has exactly $\text{deg}(O^V_i)$ neighbours in $G_S$ with vertex orbits $\mathcal{N}\big(O^V_i\big)$. In other words
    \begin{equation*}
        \sum_{u \in \mathcal{N}(v)} \sum_{j:O^V_j \in \mathcal{N}(O^V_{i})}  x_{ij}^E (u, v) = \text{deg}(O^V_i)
    \end{equation*}
\end{itemize}

Thus, by induction, for $m$ matched subgraphs $G_S \simeq H$ with $v \in \mathcal{V}_{G_S}$ and $\mathrm{Orb}(f(v)) = O^V_i$, it holds that $x_{i}^V (v) = m$ and $\sum_{u \in \mathcal{N}(v)} \sum_{j:O^V_j \in \mathcal{N}(O^V_{i})}  x_{ij}^E (u, v) = m*\text{deg}(O^V_i)$. Then it follows that:

\begin{equation}\label{gsn_v_gsn_e}
        x_{i}^V (v)  = \frac{1}{\text{deg}(O^V_{i})}\sum_{u \in \mathcal{N}(v)} \sum_{j:O^V_j \in \mathcal{N}(O^V_{i})}  x_{ij}^E (u, v)
\end{equation}

The rest of the proof is straightforward: we will assume a GSN-v using substructure counts of the graph $H$, with $L$ layers and width $w$ defined as in the main paper (Eq. (5))
Then, there exists a GSN-e with $L+1$ layers, where the first layer has width $d^{in}_V + d_V$ and implements the following function: $\mathrm{UP}^{t+1}\big(\mathbf{h}^{t}_v, \mathbf{m}^{t+1}_v\big) = [\mathbf{h}^{t}_v ; \mathbf{m}^{t+1}_v]$, where:
\begin{align*}
    \mathbf{m}^{t+1}_v  &= M^{t+1}\bigg(\Lbag(\mathbf{h}^{t}_v, \mathbf{h}^{t}_u, \mathbf{x}^E_{v,u},  \mathbf{e}_{u,v})\Rbag_{u\in \mathcal{N}(v)}  \bigg)\\
     &= [\frac{1}{\text{deg}(O^V_{1})}\sum_{u \in \mathcal{N}(v)} \sum_{j:O^V_j \in \mathcal{N}(O^V_{1})}  x_{1j}^E (u, v); \ \dots \ ;\\
     &\quad \frac{1}{\text{deg}(O^V_{d_V})}\sum_{u \in \mathcal{N}(v)} \sum_{j:O^V_j \in \mathcal{N}(O^V_{d_V})}  x_{d_V,j}^E (u, v)]\\
     &=\mathbf{x}^V_v
\end{align*}
Note that since $M$ is a universal multiset function approximator, then there exist parameters of $M$ with which the above function can be computed. The next $L$ layers of GSN-e can implement a traditional MPNN where now the input vertex features are $[\mathbf{h}^{t}_v ; \mathbf{x}^V_v]$ (which is exactly the formulation of GSN-v) and this concludes the proof. 

\end{proof}

\section{Comparison with higher-order Weisfeiler-Leman tests}\label{k-wl-comparison}

\subsection{The WL hierarchy}\label{k-wl}

Following the terminology introduced in \cite{maron2019provably}, we describe the so-called {\em Folklore WL family ($k$-FWL)}. Note that, in the majority of papers on GNN expressivity \cite{morris2019weisfeiler, maron2019provably, chen2020can} another family of WL tests is discussed, under the terminology \textit{$k$-WL} with expressive power equal to $(k-1)$-FWL. In contrast, in most graph theory papers on graph isomorphism \cite{DBLP:journals/combinatorica/CaiFI92, furer2017combinatorial, DBLP:conf/fct/ArvindFKV19} the $k$-WL term is used to describe the algorithms referred to as $k$-FWL in GNN papers. Here, we follow the $k$-FWL convention to align with the work mostly related to ours.

The $k$-FWL operates on $k$-tuples of vertices ${\mathbf{v}=(v_1, v_2, \dots ,v_k)}$ to which an initial colour $c^0_{\mathbf{v}}$ is assigned based on their \textit{isomorphism types} (see section \ref{sr_graphs_wl}), which can loosely be thought of as a generalisation of isomorphism that also preserves the ordering of the vertices in the tuple. Then, at each iteration the colour is refined as follows:
\begin{equation}
    {c^{t+1}_\mathbf{v} = 
\mathrm{HASH} \Big( c^{t}_\mathbf{v}, \, \Lbag \big( c^{t}_{\mathbf{v}_{u,1}},c^{t}_{\mathbf{v}_{u,2}},\dots ,c^{t}_{\mathbf{v}_{u,k}} \big) \Rbag_{u\in\mathcal{V}} \Big)},
\end{equation}
where $\mathbf{v}_{u,j} = (v_1, v_2, \dots ,v_{j-1}, u, v_{j+1}, \dots ,v_k)$. 

The multiset $\Lbag \big( c^{t}_{\mathbf{v}_{u,1}},c^{t}_{\mathbf{v}_{u,2}},\dots ,c^{t}_{\mathbf{v}_{u,k}} \big) \Rbag_{u\in\mathcal{V}}$ can be perceived as a form of generalised neighbourhood. Observe that all possible tuples in the graph store information necessary for the updates, thus each $k$-tuple receives information {\em from the entire graph}, contrary to the {\em local} nature of the 1-WL test.

\subsection{Why does 2-FWL fail on strongly regular graphs?}\label{sr_graphs_wl}

\begin{proof}

We first formally define what an isomorphism type is and what are the properties of the SR family:

\begin{definition}[Isomorphism Types] 
Two k-tuples $\mathbf{v}^a=\{v_1^a, v_2^a, \dots, v_k^a\}$, 
$\mathbf{v}^b=\{v_1^b, v_2^b, \dots, v_k^b\}$ will have the same isomorphism type iff:

\begin{itemize}
    \item  $\forall \, i,j \in \{0,1, \dots, k \}, \quad v_i^a=v_j^a \Leftrightarrow v_i^b=v_j^b$
    \item  $\forall \, i,j \in \{0,1, \dots, k \},     \quad v_i^a \sim v_j^a \Leftrightarrow v_i^b \sim v_j^b$, where $\sim$ means that the vertices are adjacent.
\end{itemize}
\end{definition}

Note that this is a stronger condition than isomorphism, since the mapping between the vertices of the two tuples needs to preserve order. In case the graph is employed with edge and vertex features, these need to be preserved as well (see \cite{chen2020can}) for the extended case). 

\begin{definition}[Strongly regular graph] 
A {\em SR($n$,$d$,$\lambda$,$\mu$)-graph} is a regular graph with $n$ vertices and degree $d$, where every two adjacent vertices have always $\lambda$ mutual neighbours, while every two non-adjacent vertices have always $\mu$ mutual neighbours. 
\end{definition}

Now we can proceed to the details of the proof. For the 2-FWL test, when working with simple undirected graphs without self-loops, we have the following 2-tuple isomorphism types:
\begin{itemize}
    \item $\mathbf{v}=\{v_1, v_1\}$: \textit{vertex type}. Mapped to the colour $c^0 = c_\alpha$
    \item $\mathbf{v}=\{v_1, v_2\}$ and $v_1 \not\sim v_2$:\textit{ non-edge type}. Mapped to the colour $c^0 = c_\beta$
    \item $\mathbf{v}=\{v_1, v_2\}$ and $v_1 \sim v_2$: \textit{edge type}. Mapped to the colour $c^0 = c_\gamma$
\end{itemize}
 
 For each 2-tuple $\mathbf{v}=\{v_1, v_2\}$, a generalised ``neighbour'' is the following tuple: ${(\mathbf{v}_{u,1}, \mathbf{v}_{u,2}) = \big((u, v_2), (v_1, u)\big)}$, where $u$ is an arbitrary vertex in the graph. 

Now, let us consider a strongly regular graph SR($n$,$d$,$\lambda$,$\mu$). We have the following cases:

\begin{itemize}
    \item generalised neighbour of a  \textit{vertex type} tuple: ${(\mathbf{v}_{u,1}, \mathbf{v}_{u,2}) = \big((u, v_1), (v_1, u)\big)}$. The corresponding neighbour colour tuples are:
    \begin{itemize}
        \item  $(c_\alpha, c_\alpha)$ if  $v_1=u$,
        \item $(c_\beta, c_\beta)$ if $v_1 \not \sim u$ ,
        \item $(c_\gamma, c_\gamma)$ if $v_1 \sim u$.
    \end{itemize}
    The update of the 2-FWL is: $c^{1}_{\mathbf{v}} = 
\mathrm{HASH} \Big( c_\alpha, \  \Lbag \underbrace{(c_\alpha, c_\alpha)}_{\text{1 time}}, \underbrace{(c_\beta, c_\beta)}_{\text{ $n-1-d$ times}}, \underbrace{(c_\gamma, c_\gamma)}_{\text{ $d$ times}} \Rbag\Big)$ same for all \textit{vertex type} 2-tuples.
    
    \item generalised neighbour of a \textit{non-edge type} tuple: ${(\mathbf{v}_{u,1}, \mathbf{v}_{u,2}) = \big((u, v_2), (v_1, u)\big)}$. The corresponding neighbour colour tuples are:
    \begin{itemize}
        \item $(c_\alpha, c_\beta)$ if $v_2=u$,  
        \item $(c_\beta, c_\alpha)$ if $v_1=u$, 
        \item $(c_\gamma, c_\beta)$ if $v_2 \sim u$ and $v_1 \not \sim u$, 
        \item $(c_\beta, c_\gamma)$ if $v_1 \sim u$ and $v_2 \not \sim u$, 
        \item $(c_\beta, c_\beta)$ if $v_1 \not \sim u$ and $v_2 \not \sim u$, 
        \item $(c_\gamma, c_\gamma)$ if $v_1 \sim u$ and $v_2 \sim u$.
    \end{itemize}
     The update of the 2-FWL is:
     
     $c^{1}_{\mathbf{v}} = 
\mathrm{HASH} \Big(c_\beta, \ \Lbag \underbrace{(c_\alpha, c_\beta)}_{\text{1 time}}, \underbrace{(c_\beta, c_\alpha)}_{\text{1 time}}, \underbrace{(c_\gamma, c_\beta)}_{\text{$d-\mu$ times}}, \underbrace{(c_\beta, c_\gamma)}_{\text{$d-\mu$ times}}, \underbrace{(c_\beta, c_\beta)}_{\text{$n-2 - (2d - \mu)$ times}}, \underbrace{(c_\gamma, c_\gamma)}_{\text{ $\mu$ times}} \Rbag \Big)$
same for all \textit{non-edge type} 2-tuples.

    \item generalised neighbour of an \textit{edge type} tuple:  
    \begin{itemize}
        \item $(c_\alpha, c_\gamma)$ if $v_2=u$, 
        \item $(c_\gamma, c_\alpha)$ if $v_1=u$, 
        \item $(c_\gamma, c_\beta)$ if $v_2 \sim u$ and $v_1 \not \sim u$, 
        \item $(c_\beta, c_\gamma)$ if $v_1 \sim u$ and $v_2 \not \sim u$, 
        \item $(c_\beta, c_\beta)$ if $v_1 \not \sim u$ and $v_2 \not \sim u$, 
        \item $(c_\gamma, c_\gamma)$ if $v_1 \sim u$ and $v_2 \sim u$.
    \end{itemize}
    The update of the 2-FWL is:
     
     $c^{1}_{\mathbf{v}} = 
\mathrm{HASH} \Big( c_\gamma, \ \Lbag \underbrace{(c_\alpha, c_\gamma)}_{\text{1 time}}, \underbrace{(c_\gamma, c_\alpha)}_{\text{1 time}}, \underbrace{(c_\gamma, c_\beta)}_{\text{$d-\lambda$ times}}, \underbrace{(c_\beta, c_\gamma)}_{\text{$d-\lambda$ times}}, \underbrace{(c_\beta, c_\beta)}_{\text{ $n-2-(2d - \lambda)$ times }}, \underbrace{(c_\gamma, c_\gamma)}_{\text{ $\lambda$ times}} \Rbag \Big)$
same for all \textit{edge type} 2-tuples.
\end{itemize}

From the analysis above, it is clear that all 2-tuples in the graph of the same initial type are assigned the same colour in the 1st iteration of 2-FWL. In other words, the vertices cannot be further partitioned, so the algorithm terminates. Therefore, if two SR graphs have the same parameters $n$,$d$,$\lambda$,$\mu$ then 2-FWL will yield the same colour distribution and thus the graphs will be deemed isomorphic.

\end{proof}

\section{Experimental Settings - Additional Details}\label{experiments_supp}

In this section, we provide additional implementation details of our experiments. All experiments were performed on a server equipped with 8 Tesla V100 16 GB GPUs, except for the Collab dataset where a Tesla V100 GPU with 32 GB RAM was used due to larger memory requirements.
Experimental tracking and hyperparameter optimisation were done via the Weights \& Biases platform (wandb) \cite{wandb}. Our implementation is based on native PyTorch sparse operations \cite{paszke2019pytorch} in order to ensure complete reproducibility of the results. PyTorch Geometric \cite{DBLP:journals/corr/abs-1903-02428} was used for additional operations (such as preprocessing and data loading).

In each one of the different experiments we aim to show that \textbf{structural identifiers can be used off-the-shelf and independently of the architecture.} At the same time we aim to suppress the effect of other confounding factors in the model performance, thus wherever possible we build our model on top of a baseline architecture. For more details, please see the relevant subsections. Interestingly, we observed that in most of the cases it was sufficient to replace only the first layer of the baseline architecture with a GSN layer, in order to obtain a boost in performance.

Throughout the experimental evaluation the structural identifiers $\mathbf{x}^V_v$ and $\mathbf{x}^E_{u,v}$ are one-hot encoded, by taking into account the unique count values present in the dataset. Other more sophisticated methods can be used, e.g. transformation to continuous features via a normalisation scheme or binning. However, we found that the number of unique values in our datasets were usually relatively small (which is a good indication of recurrent structural roles) and thus such methods were not necessary.

\subsection{Synthetic Experiment}

For the Strongly Regular graphs dataset (available from \url{http://users.cecs.anu.edu.au/~bdm/data/graphs.html}) we use all the available families of graphs with size of at most 35 vertices: 

\begin{itemize}
    \item SR(16,6,2,2): 2 graphs
    \item SR(25,12,5,6): 15 graphs
    \item SR(26,10,3,4): 10 graphs
    \item SR(28,12,6,4): 4 graphs
    \item SR(29,14,6,7): 41 graphs
    \item SR(35,16,6,8): 3854 graphs
    \item SR(35,18,9,9): 227 graphs
\end{itemize}

The total number of non-isomorphic pairs of the same size is $\approx 7*10^7$. We used a simple 2-layer architecture with width 64. The message aggregation was performed as in the general formulation of Eq. (5) of the main paper, where the update and the message functions are MLPs. The prediction is inferred by applying a sum readout function in the last layer and then passing the output through a MLP. Regarding the substructures, we use \textit{graphlet} counting, as certain \textit{motifs} (e.g. cycles of length up to 7) are known to be unable to distinguish strongly regular graphs (since they can be counted by the 2-FWL \cite{furer2017combinatorial, DBLP:conf/fct/ArvindFKV19}).

Given the adversities that strongly regular graphs pose in graph isomorphism testing, it would be interesting to see how this method can perform in other categories of hard instances, such as the classical  \textit{CFI} counter-examples for k-WL proposed in \cite{DBLP:journals/combinatorica/CaiFI92}, and explore further its expressive power and combinatorial properties. We leave this direction to future work.

\subsection{TUD Graph Classification Benchmarks}

For this family of experiments, due to the usually small size of the datasets, we choose a parameter-efficient architecture, in order to reduce the risk of overfitting. In particular, we follow the simple GIN architecture \cite{xu2018how} and we concatenate structural identifiers to vertex or edge features depending on the variant. Then for GSN-v, the hidden representation is updated as follows:
\begin{equation}\label{GSN-v-gin}
 {{\mathbf{h}^{t+1}_v =  \mathrm{UP}^{t+1}\Big([\mathbf{h}^t_v;\mathbf{x}^V_v] +   \sum_{u\in \mathcal{N}_v} [\mathbf{h}^t_u; \mathbf{x}^V_u]\Big)}}, 
\end{equation}
and for GSN-e:
\begin{equation}\label{GSN-e-gin}
    {\mathbf{h}^{t+1}_v =  \mathrm{UP}^{t+1}\Big([\mathbf{h}^t_v;\mathbf{x}^E_{v,v}] +   \sum_{u\in \mathcal{N}_v} [\mathbf{h}^t_u; \mathbf{x}^E_{u,v}]\Big)},
\end{equation}
where $\mathbf{x}^E_{v,v}$ is a dummy variable (also one-hot encoded) used to distinguish self-loops from edges. Empirically, we did not find training the $\epsilon$ parameter used in GIN to make a difference.

We implement an architecture similar to GIN \cite{xu2018how}, i.e. \textit{message passing layers}: 4 , \textit{jumping knowledge} from all the layers \cite{DBLP:conf/icml/XuLTSKJ18} (including the input), \textit{transformation of each intermediate graph-level representation}: linear layer, \textit{readout}: sum for biological and mean  for social networks. Vertex features are one-hot encodings of the categorical vertex labels. Similarly to the baseline, the hyperparameters search space is the following: \textit{batch size} in \{32, 128\} (except for Collab where only 32 was searched due to GPU memory limits), \textit{dropout} in \{0,0.5\}, \textit{network width} in \{16,32\} for biological networks, 64 for social networks, \textit{learning rate} in \{0.01, 0.001\}, \textit{decay rate} in \{0.5,0.9\} and \textit{decay steps} in \{10,50\} (number of epochs after which the learning rate is reduced by multiplying with the decay rate). For social networks, since they are not attributed graphs, we also experimented with using the degree as a vertex feature, but in most cases the structural identifiers were sufficient.

Model selection is done in two stages. First, we choose a substructure that we perceive as promising based on indications from the specific domain: \textit{triangles} for social networks and Proteins, and \textit{6-cycles (motifs)} for molecules. Under this setting we tune model hyperparameters for a GSN-e model. Then, we extend our search to the parameters related to the substructure collection: i.e. \textit{the maximum size $k$} and \textit{motifs vs graphlets}. In all the molecular datasets we search cycles with $k=3,\hdots, 12$, except for NCI1, where we also consider larger sizes due to the presence of large rings in the dataset (\textit{macrocycles} \cite{liu2017surveying}).  For social networks, we searched cliques with $k=3, 4, 5$.  
In Table \ref{tab:tud_datasets_hyperparams} we report the hyperparameters chosen by our model selection procedure, including the best performing substructures.

The seven datasets\footnote{more details on the description of the datasets and the corresponding tasks can be found at \cite{xu2018how}.} we chose are the intersection of the datasets used by the authors of our main baselines: the Graph Isomorphism Network (GIN) \cite{xu2018how}, a simple, yet powerful GNN with expressive power equal to the 1-WL test, and the Provably Powerful Graph Network (PPGN) \cite{maron2019provably}, a polynomial alternative to the Invariant Graph Network \cite{DBLP:conf/iclr/MaronBSL19}, that increases its expressive power to match the 2-FWL.
We also compare our results to other GNNs as well as Graph Kernel approaches. Our main baseline from the GK family is the Graph Neural Tangent Kernel (GNTK) \cite{DBLP:conf/nips/DuHSPWX19}, which is a kernel obtained from a GNN of infinite width. This operates in the Neural Tangent Kernel regime \cite{DBLP:conf/nips/JacotHG18, DBLP:conf/icml/Allen-ZhuLS19, DBLP:conf/icml/DuLL0Z19}.

\begin{table}[t]
    \centering
    \caption{Chosen hyperparameters for each of the two GSN variants for each dataset.}
    \label{tab:tud_datasets_hyperparams}
     \resizebox{\textwidth}{!}{
     {\small
    \begin{tabular}{p{0.8cm}p{3cm}|llll|lll}
        & Dataset & MUTAG & PTC & Proteins & NCI1 & Collab & IMDB-B & IMDB-M \\
        
      \hline
        
        \parbox[t]{1mm}{\multirow{9}{*}{GSN-e}} &  batch size & 32 & 128	& 32 & 32 & 32 & 32&  32\\
         
        & width  & 
         32 & 16 &  32 & 32 & 64 & 64 & 64\\
         
     & decay rate  & 0.9 & 0.5 & 0.5 & 0.9 & 0.5 &  0.5&  0.5 \\
     
        & decay steps & 50 & 50 & 10 & 10 & 50 &  10 & 10 \\
         
         & dropout & 0.5 & 0 & 0.5 & 0 & 0 & 0 & 0\\
         
        & lr & $10^{-3}$ & $10^{-3}$ & $10^{-2}$ &$10^{-3}$ &$10^{-2}$ & $10^{-3}$ &$10^{-3}$ \\
        
        & degree  & No & No & No & No & No & No & Yes\\

        & substructure type & 
          graphlets &  motifs & same & graphlets & same & same & same\\

        & substrucure family & cycles & cycles & cliques &cycles& clique & clique & cliques\\
        
        & k & 6	& 6 & 4 & 15 & 3 & 5 & 5\\

        \hline 
         \parbox[t]{1mm}{\multirow{9}{*}{GSN-v}} &  batch size & 32 & 128	& 32 & 32 & 32 & 32&  32\\
         
        & width  & 
         32 & 16 &  32 & 32 & 64 & 64 & 64\\
         
     & decay rate  & 0.9 & 0.5 &0.5 & 0.9 & 0.5 &  0.5&  0.5 \\
     
        & decay steps & 50 & 50 & 10 & 10 & 50 &  10 & 10 \\
         
         & dropout & 0.5 & 0 & 0.5 & 0 & 0 & 0 & 0\\
         
        & lr & $10^{-3}$ & $10^{-3}$ &$10^{-2}$  &$10^{-3}$ &$10^{-2}$ & $10^{-3}$ &$10^{-3}$ \\
        
        & degree  & No & No & No & No & No & Yes & Yes\\

        & substructure type & graphlets & graphlets & same& same & same & same & same\\

        & substrucure family & cycles & cycles & cliques &cycles& cliques & clique & cliques\\

        & k  &12 & 10& 4 & 3 & 3 & 4 & 3\\
        \bottomrule
    \end{tabular}}}
\end{table}

\subsection{Graph Regression on ZINC}\label{zinc_appendix}

\noindent\textbf{Experimental Details: }The ZINC dataset includes 12k molecular graphs of which 10k form the training set and the remaining 2k are equally split between validation and test (splits obtained from \url{https://github.com/graphdeeplearning/benchmarking-gnns}). Molecule sizes range from 9 to 37 vertices/atoms. Vertex features encode the type of atoms and edge features the chemical bonds between them. Again, here vertex and edge features are one-hot encoded. 

Our MPNN baseline model updates vertex representations as follows:  $\mathbf{h}^{t+1}(v) =  \text{MLP}^{t+1}(\mathbf{h}^{t}_v, \mathbf{m}^{t+1}_v)$,     $\mathbf{m}^{t+1}_v  = \sum_{u\in \mathcal{N}(v)} \text{MLP}^t\big(\mathbf{h}^{t}_v, \mathbf{h}^{t}_u, \mathbf{e}_{u,v}\big)$. Our instantiation of GSN is a simple extension where structural identifiers are also given as input to the message MLP. 

Following the same rationale as before, the network configuration is minimally modified w.r.t. the baselines provided in \cite{dwivedi2020benchmarking}, while here no hyperparameter tuning is done and we use the default ones provided by the authors. In particular, the parameters are the following: \textit{message passing layers}: 4, \textit{transformation of the output of the last layer}: MLP, \textit{readout}: sum, \textit{batch size}: 128, \textit{dropout}: 0.0, \textit{network width}: 128, \textit{learning rate}: 0.001. The learning rate is reduced by 0.5 (decay rate) after 5 epochs (\textit{decay rate
patience}) without improvement in the validation loss. Training is stopped when the learning rate reaches the \textit{minimum learning rate} value of $10^{-5}$.  Validation and test metrics are inferred using the model at the last training epoch. 

We select our best performing substructure related parameters based on the performance in the validation set in the last epoch. We search cycles with $k$ $=3,\hdots, 10$, \textit{graphlets vs motifs}, and \textit{GSN-v vs GSN-e}. The chosen hyperparameters for GSN are:  \textit{GSN-e, cycle graphlets of 10 vertices} and for GSN\textit{-EF}: \textit{GSN-v, cycle motifs of 8 vertices}. Once the model is chosen, we repeat the experiment 10 times with different seeds and report the mean and standard deviation of the test MAE in the last epoch.

\noindent\textbf{Disambiguation Scores:} In Table \ref{tab:uniqueness}, we provide the disambiguation scores $\delta$ as defined in section 5.5.1 of the main paper for different types of substructures. These are computed based on vertex structural identifiers (GSN-v).

\begin{table}[h]
         \centering
            \caption{Disambiguation scores $\delta$ on \textit{ZINC} for different substructure families and their maximum size $k$. Size $k=0$ refers to the use of the original vertex features only.}
          \begin{tabular}{r | l | l | l}
            k & Cycles & Paths & Trees \\
            \hline
            0 & 0.196 & 0.196 & 0.196\\
            3 & 0.199 & 0.540 & 0.540\\
            4 & 0.200 & 0.746 & 0.762\\
            5 & 0.256 & 0.866 & 0.875\\
            6 & 0.327 & 0.895 & 0.897\\
            7 & 0.330 & 0.900 & 0.900\\
            8 & 0.330 & 0.901 & 0.901\\
            9 & 0.330 & 0.901 & 0.901\\
            10 & 0.330 & 0.901 & 0.901\\
          \end{tabular}%
        \label{tab:uniqueness}
\end{table}

\subsection{Graph Classification on \texttt{ogbg-molhiv}}\label{app: molhiv}

The \texttt{ogbg-molhiv} dataset contains $\approx$ 41K graphs, with 25.5 vertices and 27.5 edges on average. As most molecular graphs, the average degree is small (2.2) and they exhibit a tree-like structure (average clustering coefficient 0.002). The average diameter is 12 (more details in \cite{DBLP:ogb}). Below we describe how we extend the two base architectures:

\medskip
\noindent\textbf{GIN+VN. } We follow the design choices of the authors of \cite{DBLP:ogb} and extend their architectures to include structural identifiers. Initial vertex and edge features are multi-hot encodings passed through linear layers that project them in the same embedding space, i.e. ${\mathbf{h}^{0}_v  = \mathbf{W}^0_{h}\cdot\mathbf{h}^{in}_v}$, ${\mathbf{e}^{t}_{v,u}  = \mathbf{W}^t_{e}\cdot\mathbf{e}^{in}_{u,v}}$.
The baseline model is a modification of GIN that allows for edge features: for each neighbour, the hidden representation is added to an embedding of its associated edge feature. Then the result is passed through a ReLU non-linearity which produces the neighbour's message. Formally, the aggregation is as follows:
\begin{equation}
    {\mathbf{h}^{t+1}_v =  \mathrm{UP}^{t+1}\left(\mathbf{h}^{t}_v + \sum_{u\in \mathcal{N}_v} \sigma\left(\mathbf{h}^t_u +  \mathbf{e}^{t}_{v,u}\right)\right)}
\end{equation}

In order to allow global information to be broadcasted to the vertices, a \textit{virtual node} takes part in the message passing. The virtual node representation, denoted as $\mathbf{G}^t$, is initialised as a zero vector $\mathbf{G}^0$ and then Message Passing becomes:
 \begin{equation}
    \begin{split}
       &\mathbf{\tilde{h}}^{t}_v = \mathbf{h}^{t}_v + \mathbf{G}^t, \
    \mathbf{h}^{t+1}_v =  \mathrm{UP}^{t+1}\left(\mathbf{\tilde{h}}^{t}_v + \sum_{u\in \mathcal{N}_v} \sigma\left(\mathbf{\tilde{h}}^t_u +  \mathbf{e}^{t}_{v,u}\right)\right), \ \\
 &\mathbf{G}^{t+1} = MLP^{t+1}\left(\mathbf{G}^t + \sum_{u\in \mathcal{N}_v} \mathbf{\tilde{h}}^t_u\right)
    \end{split}
\end{equation} 

We modify this model, as follows: first the substructure counts are embedded into the same embedding space as the rest of the features. Then, for GSN-v, they are added to the corresponding vertex embeddings:  ${\mathbf{\acute{h}}^{t}_v  = \mathbf{h}^{t}_v + \mathbf{W}^t_{V}\cdot\mathbf{x}^V_v}$, or for GSN-e, they are added to the edge embeddings $\mathbf{\acute{e}}^{t}_{v,u}  = \mathbf{e}^{t}_{v,u} + \mathbf{W}^t_{E}\cdot\mathbf{x}^E_{u,v}$. 

\medskip
\noindent\textbf{DGN + substructures. } 
We use the directional average operator as defined in \cite{beaini2020directional}:
\begin{equation}
    \mathbf{m}^{t+1}_v = [\sum_{u\in \mathcal{N}(v)} \alpha^1_{v,u}\mathbf{h}^{t}_u; \ldots ;\sum_{u\in \mathcal{N}(v)} \alpha^D_{v,u}\mathbf{h}^{t}_u],
\end{equation}
 where $\alpha^i_{v,u}$ are weighting average coefficients. In our case, each orbit $i$ induces a separate set of averaging coefficients. For example, for GSN-e 
 $\alpha_{v,u} = \frac{|x^{E}_{v,u}|}{\epsilon \ + \ \sum_{u\in \mathcal{N}(v)} |x^{E}_{v,u}|}$, where $x^{E}_{v,u}$ denotes edge-wise substructure counts (the index of the orbit $i$ was dropped to simplify notation). Similarly, for GSN-v, $\alpha_{v,u} = \frac{|x^{V}_{v} - x^V_{u}|}{\epsilon \ + \ \sum_{u\in \mathcal{N}(v)} |x^{V}_{v} - x^V_{u}|}$.
Subsequently, the vertex representation is updated as follows:  ${\mathbf{h}^{t+1}_v =  \text{MLP}^{t+1}(\mathbf{m}^{t+1}_v)}$. Observe that this model is simpler than the aforementioned,  in terms of both its parameter count and its expressive power. Since the MOLHIV dataset poses a significant challenge w.r.t. generalisation (the data splits reflect different molecular distributions), architectures biased towards simpler solutions usually perform better, sincey the mitigate the risk of overfitting.

In both cases we use the the same hyperparameters as the ones provided by the authors, %
and only select the substructure related parameters based on the highest validation ROC-AUC (choosing the best scoring epoch as in \cite{DBLP:ogb}). We search cycles with $k=3,\hdots, 12$, \textit{graphlets vs motifs}, and \textit{GSN-v vs GSN-e}. The chosen hyperparameters are: \textit{GSN-e, cycle graphlets of 6 vertices}.
We repeat the experiment 10 times with different seeds and report the mean and standard deviation of the train, validation and test ROC-AUC, again by choosing the best scoring epoch w.r.t the validation set.

\subsection{Structural Features \& Message Passing}\label{deepset_appendix}

The baseline architecture treats the input vertex and edge features, along with the structural identifiers, as a \textit{set}. In particular, we consider each graph as a set of independent edges $(v,u)$ endowed with the features of the endpoint vertices $\mathbf{h}_v, \mathbf{h}_u$, the structural identifiers $\mathbf{x}^V_v, \mathbf{x}^V_u$ and the edge features $\mathbf{e}(v,u)$, and we implement a DeepSets universal set function approximator \cite{zaheer2017deep} to learn a prediction function:
{\small
\begin{equation}
f\bigg(\Big\{\big(\mathbf{h}_v, \mathbf{h}_u, \mathbf{x}^V_v, \mathbf{x}^V_u, \mathbf{e}_{v,u}\big)\Big\}_{(v,u)\in\mathcal{E}_G}\bigg) = \rho\bigg(\sum_{(v,u)\in\mathcal{E}_G} \phi\big(\mathbf{h}_v, \mathbf{h}_u \mathbf{x}^V_v, \mathbf{x}^V_u, \mathbf{e}_{v,u}\big)\bigg),
\end{equation}
}
with $\mathcal{E}_G$ the edge set of the graph and $\rho, \phi$ MLPs. This baseline is naturally extended to the case where we consider edge structural identifiers by replacing $(\mathbf{x}^V_v, \mathbf{x}^V_u)$ with $\mathbf{x}^E_{v,u}$. For fairness of evaluation, we follow the exact same parameter tuning procedure as the one we followed for our GSN models for each benchmark, i.e. for the TUD datasets we first tune network and optimisation hyperaparameters (network width was set to be either equal to the ones we tuned for GSN, or such that the absolute number of learnable parameters was equal to those used by GSN; depth of the MLPs was set to 2) and subsequently we choose the substructure related parameters based on the evaluation protocol of \cite{xu2018how}. For ZINC and ogbg-molhiv we perform only substructure selection, based on the performance on the validation set. Using the same widths as in GSN leads to smaller baseline models w.r.t the absolute number of parameters, and we interestingly observed this to lead to particularly strong performance in some cases, especially Proteins and MUTAG, where our DeepSets implementation attains state-of-art results. This finding motivated us to explore `smaller' GSNs (with either reduced layer width or a single message passing layer). These GSN variants exhibited a similar trend, i.e. to perform better than their `larger' counterparts over these two datasets. We hypothesise this phenomenon to be mostly due to the small size of these datasets, which encourages overfitting when using architectures with larger capacity. In Table 4 in the main paper, we report the result for the best performing architectures, along with the number of learnable parameters.

\end{document}